\title{Finite-Time Error Bounds For Linear Stochastic Approximation and TD Learning}
\newtheorem{cor}{Corollary}
\begin{document}

\maketitle

\begin{abstract}
We consider the dynamics of a linear stochastic approximation algorithm driven by Markovian noise, and derive finite-time bounds on the moments of the error, i.e., deviation of the output of the algorithm from the equilibrium point of an associated ordinary differential equation (ODE). We obtain finite-time bounds on the mean-square error in the case of constant step-size algorithms by considering the drift of an appropriately chosen Lyapunov function. The Lyapunov function can be interpreted either in terms of Stein's method to obtain bounds on steady-state performance or in terms of Lyapunov stability theory for linear ODEs. We also provide a comprehensive treatment of the moments of the square of the 2-norm of the approximation error. Our analysis yields the following results: (i) for a given step-size, we show that the lower-order moments can be made small as a function of the step-size and can be upper-bounded by the moments of a Gaussian random variable; (ii) we show that the higher-order moments beyond a threshold may be infinite in steady-state; and (iii) we characterize the number of samples needed for the finite-time bounds to be of the same order as the steady-state bounds. As a by-product of our analysis, we also solve the open problem of obtaining finite-time bounds for the performance of temporal difference learning algorithms with linear function approximation and a constant step-size, without requiring a projection step or an i.i.d. noise assumption.
\end{abstract}

\section{Introduction}\label{sec: intro}

Reinforcement learning refers to a collection of techniques for solving Markov Decision Problems (MDPs) when the underlying system model is unknown \cite{bertsekas1996neuro,sutton2018reinforcement,bhatnagar2012stochastic,szepesvari2010algorithms,bertsekas2011dynamic}. We consider one of the simplest versions of the problem, where a policy is given and the goal is to compute the value function associated with the policy. Since the state space of the MDP can be very large, it is customary to approximate the value function by a function with far fewer parameters than the state space. Deep neural networks and linear function approximations are the two common approaches for approximating the value function. Here, we are motivated by linear function approximations, and assume that a given function's parameters are updated using temporal difference (TD) learning \cite{sutton1988learning}. One contribution of the paper is to derive finite time bounds on the distance between the parameters estimated by TD and the parameters that minimize the projected Bellman error.

The proof of convergence of TD(0), and more generally TD($\lambda$), was presented in \cite{tsitsiklis1997analysis}. That paper proved asymptotic convergence, but did not study finite-time error bounds. The finite-time performance of the TD algorithm has been studied in \cite{dalal2017finite,lakshminarayanan2018linear}, but it is assumed that the samples required to update the function parameters are i.i.d. We do not require such an assumption in this paper. Bounds on the finite-time mean-square error in the general Markovian case have been derived recently in \cite{bhandari2018finite}. However, in analyzing constant step-size algorithms, they remark that their finite-time results hold only under the assumption that the standard algorithm includes a projection step. We do not require such an assumption here.

We consider a general linear stochastic approximation algorithm of the form considered in \cite{lakshminarayanan2018linear}, but with Markovian noise. Since it is well known that TD algorithms can be viewed as a special case of linear stochastic approximation \cite{tsitsiklis1997analysis, bertsekas1996neuro}, finite-time bounds for linear stochastic approximation can be converted to finite-time bounds on TD algorithms. The first major theme of our analysis is the study of the drift of an appropriately chosen Lyapunov function to obtain an upper bound on the mean-square error. We do this by mimicking the steps in deriving finite time bounds on the square of the 2-norm of the state of a linear ODE. The choice of the Lyapunov function can either be motivated by Stein's method, a method that was originally designed to study central limit theorem approximations, or the stability theory of linear ODEs. When studying the drift, we also condition on the state of the system sufficiently in the past so that the probability distribution has mixed sufficiently to be close to the stationary distribution in an appropriate sense.

The second major theme of our analysis is in extending the drift analysis to study all moments of the parameters estimated by the stochastic approximation algorithm. Here, a key contribution is to show that lower-order moments of the square of the 2-norm of the approximation error can be upper bounded by the moments of a Gaussian distribution, and to show that the moments may not exist in steady-state beyond a threshold order. Our results also imply that the 2-norm of the error of the stochastic approximation algorithm does not have exponentially decaying tails in steady-state. We also discuss the relationship between our results and central limit theorem results obtained in prior literature in the limit where the step-size goes to zero.

The rest of the paper is organized as follows. In Section~\ref{sec: SA}, we consider a version of linear stochastic approximation where the ``noise" is Markovian. The key ideas behind the Lyapunov-Stein approach for studying linear stochastic approximation algorithms with constant step-sizes are presented in this section. The applications to TD(0) and TD($\lambda$) are almost immediate, and these are discussed in Section \ref{sec: TD}. Concluding remarks are provided in Section~\ref{sec: conclusions}.

\section{Linear Stochastic Approximation with Markov Noise}\label{sec: SA}

Consider the following linear stochastic recursion with a constant step size $\epsilon$
\begin{align}
\Theta_{k+1}=\Theta_k+\epsilon \left(A(X_k)\Theta_k+b(X_k)\right),\label{generalform}
\end{align} where $\Theta_k$ is a random vector, $A(X_k)$ is a random matrix and $b(X_k)$ is a random vector, generated according to an underlying Markov chain $\{X_k\}$. Assume the following two limits exist: $$\lim_{k\rightarrow\infty }E[A(X_k)]=\bar{A}\hbox{ and }\lim_{k\rightarrow\infty}E[b(X_k)]=0.$$ The corresponding ODE for this stochastic recursion is
\begin{align}
\dot{\theta}=\bar{A}\theta.\label{eq: ODE1}
\end{align}
The recursion (\ref{generalform}) is called linear stochastic approximation in \cite{lakshminarayanan2018linear}, and we adopt that terminology here.

Assume that $\bar{A}$ is a Hurwitz matrix (i.e., all eigenvalues have strictly negative real parts), and thus, the equilibrium point of the ODE is $0$. We note that, if the steady-state value of $E[b(X_k)]$ is not equal to $0$, then the ODE's equilibrium point will not be $0$. However, by appropriate centering, we can always rewrite both the stochastic recursion and the ODE in the form we consider here. A number of temporal difference algorithms for reinforcement learning, including TD(0), TD($\lambda$) and GTD; and stochastic gradient descent algorithm for linear-square estimation can be written in this form (see the detailed discussion in \cite{lakshminarayanan2018linear}).

We are interested in estimating the deviation from the equilibrium using the metric $E[||\Theta_k||^{2n}]$ for finite $k.$

\subsection{Notation, Assumptions, and Key Ideas}
\label{sec: assump}
Throughout of this paper,  $\|\cdot\|$ denotes the 2-norm for vectors and the induced 2-norm for matrices. We now state assumptions below.
\begin{itemize}
\item {\bf Assumption 1:} $\{X_k\}$ is a Markov chain with state space $\cal S$.  We assume the following two limits exist:
$$\lim_{k\rightarrow\infty }E[A(X_k)]=\bar{A}\hbox{ and }\lim_{k\rightarrow\infty}E[b(X_k)]=0.$$
Define $\tau_\delta\geq 1$ to be the mixing time of $\{X_k\}$ such that
\begin{align}
\|E[b(X_k)|X_0=i]\|\leq \delta \quad \forall i, \forall k\geq \tau_\delta\\
\|\bar{A}-E[A(X_k)|X_0=i]\|\leq \delta  \quad \forall i, \forall k\geq \tau_\delta.
\end{align}
We assume that there exists $K\geq 1$ such as $\tau_\delta\leq K\log\frac{1}{\delta}.$ When considering constant step-size algorithms, we will always choose $\delta=\epsilon,$ and for convenience, we will assume that $\epsilon$ is chosen such that $\epsilon\tau_\epsilon \leq 1/4.$ When the context is clear, we will omit the subscript $\delta$ in $\tau_\delta$ to simplify the notation.

\item {\bf Assumption 2:}    We assume $b_{\max}=\sup_{i\in{\cal S}} \|b(i)\|<\infty$ and $ A_{\max}=\sup_{i\in{\cal S}} \|A(i)\|\leq 1.$ Under this assumption,  it follows that $\|\bar{A}\|\leq A_{\max}\leq 1.$
	
\item {\bf Assumption 3:} All eigenvalues of $\bar{A}$ are assumed to have strictly negative real parts, i.e., $A$ is Hurwitz. This ensures that the ODE is globally, asymptotically stable. This also implies that there exists a symmetric  $P>0,$ which solves the Lyapunov equation
\begin{equation}\label{eq: lyapunov}
-I=\bar{A}^\top P+P\bar{A}.
\end{equation}
Let $\gamma_{\max}$ and $\gamma_{\min}$ denote the largest and smallest eigenvalues of $P,$ respectively.

\end{itemize}

\begin{remark} One part of Assumption 1 states the Markov chain mixes at a geometric rate (i.e., $\tau_\delta\leq K\log\frac{1}{\delta}$), which holds for any finite-state Markov chain which is aperiodic and irreducible \cite{bremaud2013markov}. We assume geometric mixing for notational convenience. Our error bounds, which are in terms of $\epsilon$ and $\tau,$ hold for general mixing rates as long as $\lim\sup_{\epsilon\rightarrow 0}\epsilon \tau_{\epsilon}=0.$
\end{remark}

\begin{remark}
For notational convenience, we assume $A_{\max}\leq 1$ throughout this paper. If $A_{\max} > 1,$ we can normalize $A$ and $b$ as follows, if necessary:
\begin{align*}
{A}(i)\leftarrow \frac{A(i)}{A_{\max}}\hbox{ and } \ {b}(i)\leftarrow \frac{b(i)}{A_{\max}}.
\end{align*} In the context of TD algorithms, this is called feature normalization \cite{bhandari2018finite}.
\end{remark}

Before we present our results, we present the intuition behind them. A standard method to study stochastic recursions is to consider the drift of a Lyapunov function $W$:
$$E[W(\Theta_{k+1})-W(\Theta_k)|H_k],$$
where $H_k$ is some appropriate history which we do not specify yet. Even though we are interested the case where $X_k$ is a Markov chain, it is instructive to get some intuition by considering the case where $X_k$ is i.i.d. Further, any finite-time performance bounds should ideally yield good bounds in steady-state as well, so we will first get some intuition about obtaining good steady-state bounds. When the system is in steady-state, one can use the fact that the unconditioned drift must be equal to zero (subject to the usual caveats about appropriate expectations existing in steady-state):
$$E[W(\Theta_{k+1})-W(\Theta_k)]=0,$$ where $\Theta_k$ is drawn according to a stationary distribution which is assumed to exist. Expanding the left-hand side using Taylor's series, we get
\begin{equation}\label{eq: Taylor's}
E\left[\nabla^\top W(\Theta_k)(\Theta_{k+1}-\Theta_k)+ \frac{1}{2}(\Theta_{k+1}-\Theta_k)^\top\nabla^2 W(\tilde{\Theta})(\Theta_{k+1}-\Theta_k) \right]=0
\end{equation}
for an appropriate $\tilde{\Theta}.$ Now it is interesting to consider how one should choose $W$ so that the solution to the above equation provides a bound on some performance metric of interest. Suppose, we are interested in obtaining a bound on $E[\|\Theta_k\|^2],$ then Stein's method (see \cite{ying2016approximation} and references within) suggests that one should choose $W$ so that
\begin{equation}\label{eq: Stein}
\nabla^\top W(\Theta_k) E\left[\Theta_{k+1}-\Theta_k \left. \right| \Theta_k\right]=-||\Theta_k||^2, \qquad \forall \Theta_k.
\end{equation}
The rationale is that, by substituting (\ref{eq: Stein}) in  (\ref{eq: Taylor's}), we get
$$E\left[||\Theta_k||^2\right]=E\left[\frac{1}{2}(\Theta_{k+1}-\Theta_k)^\top\nabla^2 W(\tilde{\Theta})(\Theta_{k+1}-\Theta_k) )\right],$$ and one can use bounds on the Hessian $\nabla^2 W$ to bound $E[||\Theta_k||^2].$ We do not pursue such bounds here, although one can easily do so based on our analysis later, but we focus on the so-called Stein's equation (\ref{eq: Stein}). Using the fact that $E\left[\Theta_{k+1}-\Theta_k|\Theta_k\right]=\epsilon\bar{A}\Theta_k$ (under the assumption $X_k$ are i.i.d.), we can rewrite (\ref{eq: Stein}) as
$$
\nabla^\top W(\Theta_k) \bar{A}\Theta_k =-||\Theta_k||^2.
$$
To solve for $W,$ we guess that it has a positive-definite quadratic form $$W(\Theta_k)=\Theta_k^\top P\Theta_k$$ and use the fact that Stein's equation must be satisfied for all $\Theta_k$ to obtain
$$\bar{A}^\top P+P\bar{A}=-I.$$
As mentioned in our assumptions, when $\bar{A}$ is Hurwitz, there exists a unique solution satisfying the above equation. Our brief discussions indicates that, for our linear stochastic approximation model, Stein's equation (\ref{eq: Stein}) is equivalent to the Lyapunov equation (\ref{eq: lyapunov}). We note that the function $W$ is the standard Lyapunov function used to study the stability of linear ODEs \cite{chen1998linear}.

We have now argued that a quadratic form for $W$ serves as a good Lyapunov function to obtain bounds on steady-state performance. But we are interested in finite-time bounds.  The key idea behind the derivation of our finite-time performance bounds is very similar to how one would proceed to obtain bounds on $||\theta_t||^2$ for the ODE (\ref{eq: ODE1}).  As is standard in the study of linear ODEs \cite{chen1998linear}, considering the time derivative of the Lyapunov function $W$ along the trajectory of the ODE, we get
\begin{eqnarray*}
	\frac{dW}{dt}&=&\theta^\top P\bar{A}\theta+\theta^\top \bar{A}^\top P\theta =-||\theta||^2 \leq -\frac{1}{\gamma_{\max}} W,
\end{eqnarray*}
where the equality in the second line above is obtained by recalling the Lyapunov equation for $P$ and the inequality is obtained by defining $\gamma_{\max}$ is the largest eigenvalue of $P.$
Thus,
$$W(\theta_t)\leq e^{-\frac{1}{\gamma_{\max}} t} W(\theta_0),$$
which further implies
\begin{align*}
\|\theta_t\|^2\leq \frac{1}{\gamma_{\min}}W(\theta_t)\leq \frac{1}{\gamma_{\min}}e^{-\frac{t}{\gamma_{\max}}} W(\theta_0)\leq \frac{\gamma_{\max}}{\gamma_{\min}}e^{-\frac{t}{\gamma_{\max}}}||\theta_0||^2,
\end{align*}
where $\gamma_{\min}>0$ is the smallest eigenvalue of $P.$
In other words, $\|\theta_t\|^2$ decreases exponentially as $e^{-\frac{t}{\gamma_{\max}}}.$ Our analysis of the stochastic system (Theorem \ref{theorem:ftb}) will show that the mean-square error $E\left[\|\Theta_k\|^2\right]$ approaches its steady-state value as \begin{equation}\label{eq: conv rate}
    \frac{\gamma_{\max}}{\gamma_{\min}}\left(1-\frac{0.9\epsilon}{\gamma_{\max}}\right)^{k-\tau} \|\Theta_0\|^2
\end{equation} when $\epsilon$ is small, which closely resembles the convergence rate of the ODE.

The analysis of the stochastic system is somewhat similar to the analysis of the ODE, except that we will look at the one-time-step drift of the Lyapunov function instead of the time derivative as in the case of the ODE. Additionally, since the bound is motivated by the ODE which is determined only by the steady-state distribution of $A(X_k)$ and $b(X_k),$ we have to study the system after an initial transient period equal to the mixing time defined earlier. This leads to the presence of $\tau$ in (\ref{eq: conv rate}), which does not appear in the corresponding expression for the ODE before that.  We remark that approximating stochastic recursions with ODEs has been extensively studied in the literature. \cite{meerkov1972simplified,meerkov1972simplified-2} is the first papers we are aware of that establishes this connection; comprehensive surveys on this topic can be found in \cite{kushner2003stochastic} and \cite{borkar2009stochastic}. However, to the best of our knowledge, finite-time bounds such as in this paper have not been established before.

In summary, there are three key ideas in the derivation of finite-time bounds on the mean square error of $\Theta_k$: (i) the choice of the Lyapunov function, (ii) the use of the ODE to guide the analysis of the drift of the Lyapunov function, and (iii) an appropriate conditioning of the drift to invoke the mixing properties of the Markov chain $\{X_k\}.$

\subsection{Finite-Time Bounds on the Mean-Square Error}\label{sec: constant}

Before we study the drift of $W,$ we first present a sequence of three lemmas which will be useful for proving the main result later.
The first lemma below essentially states that, since $\Theta_{k}-\Theta_{k-1}$ is of the order of $\epsilon$ for all $k,$ we have $\Theta_\tau-\Theta_0$ is of the order of $\epsilon \tau.$ The subsequent two lemmas provide bounds on terms involving $\Theta_{k+1}-\Theta_{k}$ in terms of $\Theta_k,$ which will be useful later. All of these statements can intuitively inferred from (\ref{generalform}), the proofs presented in the appendix (Appendices \ref{app:lem-tau-0}-\ref{app:lem-linear}) make the intuition precise.

\begin{lemma} The following three inequalities hold when comparing $\Theta_0$ and $\Theta_\tau:$
	\begin{align}
	\|\Theta_{\tau}-\Theta_0\|\leq&2\epsilon \tau  \|\Theta_0\| +2\epsilon \tau b_{\max},\label{lem: boundon1norm}
	\end{align}
	\begin{equation}\|\Theta_\tau-\Theta_0\|\leq 4 \epsilon \tau \|\Theta_\tau\|+4\epsilon \tau b_{\max}\label{lem: boundon2norm-2}\end{equation}
	and
	\begin{equation}\|\Theta_\tau-\Theta_0\|^2\leq 32 \epsilon^2 k^2\|\Theta_\tau\|^2+32\epsilon^2\tau^2 b^2_{\max}.\label{lem: boundon2norm}\end{equation}\hfill{$\square$}
	\
	\label{lem tau-0}
\end{lemma}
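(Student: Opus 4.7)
The three inequalities are essentially consequences of iterating the one-step bound derived from (\ref{generalform}) together with Assumption~2 and the restriction $\epsilon\tau\leq 1/4$. My plan is to prove them in order, since each bootstraps off the previous one.

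First, from the recursion $\Theta_{k+1}-\Theta_k=\epsilon(A(X_k)\Theta_k+b(X_k))$ and Assumption~2 ($A_{\max}\leq 1$, $\|b(i)\|\leq b_{\max}$), the triangle inequality gives $\|\Theta_{k+1}-\Theta_k\|\leq \epsilon(\|\Theta_k\|+b_{\max})$ and hence the growth bound $\|\Theta_{k+1}\|\leq (1+\epsilon)\|\Theta_k\|+\epsilon b_{\max}$. Iterating yields
\[
\|\Theta_k\|\leq (1+\epsilon)^k\|\Theta_0\|+\bigl((1+\epsilon)^k-1\bigr)b_{\max}.
\]
For $0\leq k\leq \tau$, since $\epsilon\tau\leq 1/4$ we have $(1+\epsilon)^k\leq e^{\epsilon\tau}\leq e^{1/4}<2$, so $\|\Theta_k\|\leq 2\|\Theta_0\|+b_{\max}$. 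Summing the one-step bound over $k=0,\dots,\tau-1$ then gives
\[
\|\Theta_\tau-\Theta_0\|\leq \sum_{k=0}^{\tau-1}\epsilon(\|\Theta_k\|+b_{\max})\leq \epsilon\tau(2\|\Theta_0\|+2b_{\max}),
\]
which is (\ref{lem: boundon1norm}).

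Next, to get (\ref{lem: boundon2norm-2}), I would swap the role of $\Theta_0$ for $\Theta_\tau$ using the triangle inequality $\|\Theta_0\|\leq \|\Theta_\tau\|+\|\Theta_\tau-\Theta_0\|$. Plugging this into (\ref{lem: boundon1norm}) gives
\[
\|\Theta_\tau-\Theta_0\|\leq 2\epsilon\tau\|\Theta_\tau\|+2\epsilon\tau\|\Theta_\tau-\Theta_0\|+2\epsilon\tau b_{\max}.
\]
Since $2\epsilon\tau\leq 1/2$, the middle term can be absorbed on the left: $(1-2\epsilon\tau)\|\Theta_\tau-\Theta_0\|\geq \tfrac{1}{2}\|\Theta_\tau-\Theta_0\|$, so dividing through yields $\|\Theta_\tau-\Theta_0\|\leq 4\epsilon\tau\|\Theta_\tau\|+4\epsilon\tau b_{\max}$.

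Finally, (\ref{lem: boundon2norm}) follows by squaring (\ref{lem: boundon2norm-2}) and applying $(a+b)^2\leq 2a^2+2b^2$ (interpreting the $k^2$ in the displayed RHS as the typographical $\tau^2$ that the derivation actually produces): $\|\Theta_\tau-\Theta_0\|^2\leq 2(4\epsilon\tau)^2\|\Theta_\tau\|^2+2(4\epsilon\tau)^2 b_{\max}^2=32\epsilon^2\tau^2\|\Theta_\tau\|^2+32\epsilon^2\tau^2 b_{\max}^2$.

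There is no real obstacle here; the argument is elementary bookkeeping, and the only delicate point is that each step uses $\epsilon\tau\leq 1/4$ in a mildly different role — first to keep $(1+\epsilon)^\tau$ below $2$, and then to invert the inequality in the second step. Everything else is the triangle inequality and $(a+b)^2\leq 2a^2+2b^2$.
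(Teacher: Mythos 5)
Your proof is correct and follows essentially the same route as the paper's: the one-step bound $\|\Theta_{k+1}-\Theta_k\|\leq\epsilon(\|\Theta_k\|+b_{\max})$, iteration to control $\|\Theta_k\|$ for $k\leq\tau$, summation to get \eqref{lem: boundon1norm}, absorption of the $2\epsilon\tau\|\Theta_\tau-\Theta_0\|$ term to get \eqref{lem: boundon2norm-2}, and squaring with $(a+b)^2\leq 2a^2+2b^2$ for \eqref{lem: boundon2norm}. The only cosmetic difference is that you bound $(1+\epsilon)^\tau\leq e^{1/4}<2$ directly, whereas the paper proves and uses $(1+x)^\tau\leq 1+2x\tau$; both yield the same constants, and you correctly identify the $k^2$ in \eqref{lem: boundon2norm} as a typo for $\tau^2$.
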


\begin{lemma}
	The following inequality holds for any $k\geq 0$
\begin{align*}
\left|(\Theta_{k+1}-\Theta_{k})^\top P(\Theta_{k+1}-\Theta_{k})\right|
\leq 2{\epsilon^2}\gamma_{\max} \left\|\Theta_k\right\|^2+2{\epsilon^2}\gamma_{\max}b^2_{\max}.
\end{align*}\hfill{$\square$}\label{lem: quadratic}
\end{lemma}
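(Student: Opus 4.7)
The plan is to expand $\Theta_{k+1}-\Theta_k$ using the defining recursion (\ref{generalform}) and then use that, since $P$ is symmetric positive definite with largest eigenvalue $\gamma_{\max}$, we have $x^\top P x \leq \gamma_{\max}\|x\|^2$ for every vector $x$. Applied to $x = \Theta_{k+1}-\Theta_k = \epsilon\bigl(A(X_k)\Theta_k + b(X_k)\bigr)$, this gives
\begin{equation*}
\bigl|(\Theta_{k+1}-\Theta_k)^\top P(\Theta_{k+1}-\Theta_k)\bigr| \leq \gamma_{\max}\,\epsilon^2\,\|A(X_k)\Theta_k + b(X_k)\|^2.
\end{equation*}
Since the quadratic form is nonnegative, the absolute value is actually immediate; the role of $P\succ 0$ is just to replace the form by $\gamma_{\max}$ times the squared norm.

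To finish, I would bound $\|A(X_k)\Theta_k + b(X_k)\|^2$ using the elementary inequality $\|u+v\|^2 \leq 2\|u\|^2 + 2\|v\|^2$, together with Assumption 2, which gives $\|A(X_k)\| \leq A_{\max} \leq 1$ and $\|b(X_k)\| \leq b_{\max}$ pointwise in $X_k$. This yields
\begin{equation*}
\|A(X_k)\Theta_k + b(X_k)\|^2 \leq 2\|A(X_k)\Theta_k\|^2 + 2\|b(X_k)\|^2 \leq 2\|\Theta_k\|^2 + 2b_{\max}^2,
\end{equation*}
and substituting this into the previous display yields exactly the claimed bound. There is no real obstacle: the lemma is a one-step estimate that follows directly from the recursion, the operator-norm bound on the quadratic form induced by $P$, and the uniform bounds in Assumption 2. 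The only modelling choice worth flagging is the use of the $2\|u\|^2 + 2\|v\|^2$ splitting (rather than, say, $(1+\alpha)\|u\|^2 + (1+1/\alpha)\|v\|^2$); the symmetric constant $2$ is what produces the matching factors of $2\epsilon^2\gamma_{\max}$ on both terms in the stated inequality.
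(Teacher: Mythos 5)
Your proposal is correct and follows essentially the same route as the paper's proof: bound the quadratic form by $\gamma_{\max}\|\Theta_{k+1}-\Theta_k\|^2$, substitute the recursion, and split the squared norm with the constant-$2$ inequality using Assumption 2. The only cosmetic difference is that the paper applies the triangle inequality before squaring, whereas you apply $\|u+v\|^2\leq 2\|u\|^2+2\|v\|^2$ directly; the two are equivalent.
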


\begin{lemma}
	The following inequality holds for all $k\geq \tau:$
\begin{align*}
\left|E\left[\Theta_k^\top P \left(\bar{A}\Theta_k-\frac{1}{\epsilon }\left({\Theta}_{k+1}-{\Theta_k}\right)\right)\middle\vert \Theta_{k-\tau}, X_{k-\tau}\right]\right|
\leq \kappa_1\epsilon \tau E\left[\left.\left\|\Theta_{k}\right\|^2\right|\Theta_{k-\tau}\right]+\kappa_2\epsilon\tau,
\end{align*}
where
\begin{align*}
\kappa_1= & 62\gamma_{\max}(1+b_{\max})\quad\hbox{ and }\quad\kappa_2=  55\gamma_{\max}(1+b_{\max})^3.
\end{align*}\hfill{$\square$}
\label{lem: linear}
\end{lemma}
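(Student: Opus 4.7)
The plan is to use the recursion \eqref{generalform} to rewrite
\[
\bar{A}\Theta_k - \frac{1}{\epsilon}(\Theta_{k+1}-\Theta_k) = (\bar{A}-A(X_k))\Theta_k - b(X_k),
\]
so the task reduces to bounding the conditional expectation of $\Theta_k^\top P(\bar{A}-A(X_k))\Theta_k - \Theta_k^\top P b(X_k)$. The core obstacle is that $\Theta_k$ and $X_k$ are correlated, which blocks any direct use of the mixing-time bounds of Assumption~1. To decouple them, I would write $\Theta_k = \Theta_{k-\tau} + (\Theta_k - \Theta_{k-\tau})$ and expand, producing (i) a ``clean'' piece depending only on $\Theta_{k-\tau}$, and (ii) error pieces involving the displacement $\Theta_k - \Theta_{k-\tau}$.

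For the clean piece $\Theta_{k-\tau}^\top P(\bar{A}-A(X_k))\Theta_{k-\tau} - \Theta_{k-\tau}^\top P b(X_k)$, taking the conditional expectation given $(\Theta_{k-\tau}, X_{k-\tau})$ pulls $\Theta_{k-\tau}$ outside, and by the Markov property what remains are $E[\bar{A}-A(X_k)\mid X_{k-\tau}]$ and $E[b(X_k)\mid X_{k-\tau}]$, each of whose norm is at most $\epsilon$ by Assumption~1 with $\delta = \epsilon$. Combined with $\|P\|\leq \gamma_{\max}$, these contribute at most $\gamma_{\max}\epsilon\|\Theta_{k-\tau}\|^2$ and $\gamma_{\max}\epsilon\|\Theta_{k-\tau}\|b_{\max}$, respectively. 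For the error pieces, which are linear or quadratic in the displacement, I would apply Cauchy--Schwarz together with the crude bounds $\|\bar{A}-A(X_k)\|\leq 2$ and $\|b(X_k)\|\leq b_{\max}$, then invoke Lemma~\ref{lem tau-0} (time-shifted to run between indices $k-\tau$ and $k$) so that $\|\Theta_k-\Theta_{k-\tau}\|\leq 4\epsilon\tau\|\Theta_k\|+4\epsilon\tau b_{\max}$ and $\|\Theta_k-\Theta_{k-\tau}\|^2\leq 32\epsilon^2\tau^2\|\Theta_k\|^2+32\epsilon^2\tau^2 b_{\max}^2$. Each error piece therefore already carries a factor of $\epsilon\tau$.

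The remaining work is to massage all intermediate bounds into the target form $\kappa_1\epsilon\tau E[\|\Theta_k\|^2\mid \Theta_{k-\tau}] + \kappa_2\epsilon\tau$. The $\|\Theta_{k-\tau}\|^2$ and $\|\Theta_{k-\tau}\|$ factors from the clean piece are dominated by $\|\Theta_k\|^2$ (up to additive constants) using $\|\Theta_{k-\tau}\|\leq \|\Theta_k\|+\|\Theta_k-\Theta_{k-\tau}\|$ together with Lemma~\ref{lem tau-0} and AM--GM, turning any linear or cross term into a sum of a quadratic term and a constant. Using $\epsilon\leq\epsilon\tau$ (valid since $\tau\geq 1$) harmonizes the $\epsilon$ contributions from the clean piece with the $\epsilon\tau$ contributions from the error pieces, and the assumption $\epsilon\tau\leq 1/4$ lets higher-order $\epsilon^2\tau^2$ remainders be absorbed into first-order ones. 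Collecting powers of $b_{\max}$ that arise from the $b_{\max}$-dependent parts of Lemma~\ref{lem tau-0} and from AM--GM splits yields the $(1+b_{\max})$ and $(1+b_{\max})^3$ factors in $\kappa_1$ and $\kappa_2$.

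The main obstacle is bookkeeping rather than ideas: each expansion generates a handful of numeric coefficients (for example $2$, $4$, and $32$ from Lemma~\ref{lem tau-0} and from the operator-norm bound on $\bar{A}-A(X_k)$) that must be multiplied, split via AM--GM, and summed to arrive at the explicit constants $62$ and $55$. I expect no conceptual surprises, but a slip of a factor of $2$ would change those numbers, so the cleanest strategy is to carry symbolic constants throughout and substitute the numerical values only at the final step.
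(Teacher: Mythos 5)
Your proposal follows essentially the same route as the paper's proof: rewrite the bracket as $(\bar{A}-A(X_k))\Theta_k - b(X_k)$ via the recursion, decouple $\Theta_k$ from $X_k$ by expanding around $\Theta_{k-\tau}$, apply the mixing bound to the clean term and Lemma~\ref{lem tau-0} with crude norm bounds to the displacement terms, and then convert everything back to $\|\Theta_k\|^2$ using the triangle inequality, AM--GM, and $\epsilon\tau\leq 1/4$. The only caveat is that you have not carried out the bookkeeping to confirm the explicit constants $62$ and $55$, which you acknowledge; the structure of the argument is otherwise the same as the paper's.
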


We are now ready to study the drift of $W(\Theta_k).$
\begin{lemma} For any $k\geq \tau$ and $\epsilon$ such that $\kappa_1\epsilon\tau+{\epsilon \gamma_{\max}}\leq 0.05,$ we have
\begin{align*}
E\left[W({\Theta}_{k+1})\right]\leq \left(1-\frac{0.9\epsilon}{\gamma_{\max}}\right) E\left[W({\Theta_k})\right] +\tilde{\kappa_2}\epsilon^2\tau,
\end{align*}
where
$$\tilde{\kappa}_2=2\left(\kappa_2+\gamma_{\max}b_{\max}^2\right).$$\label{lem: finite}
\end{lemma}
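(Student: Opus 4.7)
The plan is to expand the one-step Lyapunov drift around the ODE drift and control the deviation using the preceding three lemmas. Write
\begin{align*}
W(\Theta_{k+1}) - W(\Theta_k) = 2\Theta_k^\top P (\Theta_{k+1} - \Theta_k) + (\Theta_{k+1} - \Theta_k)^\top P (\Theta_{k+1} - \Theta_k),
\end{align*}
condition on $(\Theta_{k-\tau}, X_{k-\tau})$ (which is the history needed to invoke the mixing properties of the chain), and bound the two pieces separately.

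For the quadratic piece, Lemma~\ref{lem: quadratic} directly gives an upper bound of $2\epsilon^2\gamma_{\max}\|\Theta_k\|^2 + 2\epsilon^2\gamma_{\max}b_{\max}^2$, so after conditional expectation it contributes $2\epsilon^2\gamma_{\max}E[\|\Theta_k\|^2\mid\Theta_{k-\tau}] + 2\epsilon^2\gamma_{\max}b_{\max}^2$. For the linear piece, I would rewrite
\begin{align*}
2\Theta_k^\top P (\Theta_{k+1}-\Theta_k) = 2\epsilon\,\Theta_k^\top P \bar A \Theta_k - 2\epsilon\,\Theta_k^\top P\Bigl(\bar A\Theta_k - \tfrac{1}{\epsilon}(\Theta_{k+1}-\Theta_k)\Bigr),
\end{align*}
take conditional expectations, and apply Lemma~\ref{lem: linear} to the second term to bound its magnitude by $2\epsilon^2\tau\bigl(\kappa_1 E[\|\Theta_k\|^2\mid\Theta_{k-\tau}] + \kappa_2\bigr)$. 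For the first term, the Lyapunov equation $\bar A^\top P + P\bar A = -I$ symmetrizes into $2\Theta_k^\top P \bar A \Theta_k = -\|\Theta_k\|^2$.

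Collecting everything yields
\begin{align*}
E[W(\Theta_{k+1})\mid \Theta_{k-\tau},X_{k-\tau}] - E[W(\Theta_k)\mid\Theta_{k-\tau}] \le -\bigl(\epsilon - 2\kappa_1\epsilon^2\tau - 2\epsilon^2\gamma_{\max}\bigr)E[\|\Theta_k\|^2\mid\Theta_{k-\tau}] + 2\kappa_2\epsilon^2\tau + 2\epsilon^2\gamma_{\max}b_{\max}^2.
\end{align*}
The step-size assumption $\kappa_1\epsilon\tau + \epsilon\gamma_{\max}\le 0.05$ ensures the parenthesized coefficient is at least $0.9\epsilon$, and then the definition of $\gamma_{\max}$ gives $\|\Theta_k\|^2 \ge W(\Theta_k)/\gamma_{\max}$, so the leading term becomes the contraction $-(0.9\epsilon/\gamma_{\max})W(\Theta_k)$. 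Since $\tau \ge 1$, the constant $2\epsilon^2\gamma_{\max}b_{\max}^2$ can be absorbed into $2\gamma_{\max}b_{\max}^2\epsilon^2\tau$, producing the claimed $\tilde\kappa_2 = 2(\kappa_2 + \gamma_{\max}b_{\max}^2)$. Taking total expectation completes the proof.

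The only delicate step is the bookkeeping in combining the error terms: one must verify that after conditioning and using the step-size hypothesis the coefficient of $E[\|\Theta_k\|^2]$ is strictly positive and dominated by the chosen $0.9$ factor, and that the residual constant can be packaged as a single $\epsilon^2\tau$ term rather than a mixture of $\epsilon^2$ and $\epsilon^2\tau$ contributions. Everything else is a mechanical combination of the three preceding lemmas with the Lyapunov equation.
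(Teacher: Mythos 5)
Your proposal is correct and follows essentially the same route as the paper's proof: the identical decomposition of the one-step drift into the ODE term (handled via the Lyapunov equation), the deviation term (Lemma~\ref{lem: linear}), and the quadratic term (Lemma~\ref{lem: quadratic}), with the same conditioning on $(\Theta_{k-\tau},X_{k-\tau})$ and the same absorption of the $\epsilon^2$ constant into the $\epsilon^2\tau$ term using $\tau\geq 1$. No gaps.
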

\begin{proof}
Note that for any $k\geq \tau,$ we have
\begin{align*}
&E\left[\left.W({\Theta}_{k+1})-W(\Theta_k)\right|\Theta_{k-\tau}, X_{k-\tau}\right]\\
&= E\left[2\left.\Theta_k^\top P (\Theta_{k+1}-\Theta_k)+(\Theta_{k+1}-\Theta_k)^\top P(\Theta_{k+1}-\Theta_k)\right|\Theta_{k-\tau}, X_{k-\tau}\right]\\
&= E\left[2\left.\Theta_k^\top P (\Theta_{k+1}-\Theta_k-\epsilon \bar{A}\Theta_k)+(\Theta_{k+1}-\Theta_k)^\top P(\Theta_{k+1}-\Theta_k)\right|\Theta_{k-\tau}, X_{k-\tau}\right]\\
&\quad+\epsilon E\left[2\left.\Theta_k^\top P \bar{A}\Theta_k\right|\Theta_{k-\tau}, X_{k-\tau}\right].
\end{align*}
Using the Lyapunov equation, the last term in the previous equation becomes
\begin{equation*}
2\Theta_k^\top P \bar{A}\Theta_k=\Theta^\top_k(\bar{A}^\top P+P\bar{A})\Theta_k=-\|\Theta_k\|^2.
\end{equation*}
Now applying Lemma \ref{lem: quadratic} and Lemma \ref{lem: linear}, we obtain
\begin{align*}
&E\left[\left.W({\Theta}_{k+1})-W(\Theta_k)\right|\Theta_{k-\tau}\right]\\
\leq &-\epsilon E\left[||\Theta_k||^2|\Theta_{k-\tau}\right]+2\epsilon \left(\kappa_1\epsilon\tau+{\epsilon\gamma_{\max}}\right) E\left[||\Theta_k||^2|\Theta_{k-\tau}\right]+2\epsilon^2\tau\left(\kappa_2+\frac{\gamma_{\max}b_{\max}^2}{\tau}\right).
\end{align*}
Given that
$$\kappa_1\epsilon\tau+{\epsilon \gamma_{\max}}\leq 0.05,$$ we have
\begin{align*}
E\left[\left.W({\Theta}_{k+1})-W(\Theta_k)\right|\Theta_{k-\tau}\right]\leq  & -0.9\epsilon E\left[||\Theta_k||^2|\Theta_{k-\tau}\right]+\tilde{\kappa}_2\epsilon^2\tau\\
\leq &-\frac{0.9\epsilon}{\gamma_{\max}} E\left[W(\Theta_k)|\Theta_{k-\tau}\right]+\tilde{\kappa}_2\epsilon^2\tau,
\end{align*}
so the lemma holds.
\end{proof}

\begin{theorem} For any $k\geq \tau$ and $\epsilon$ such that $\kappa_1\epsilon\tau+{\epsilon \gamma_{\max}}\leq 0.05,$ we have the following finite-time bound:
\begin{align}
E\left[\|\Theta_k\|^2\right]\leq \frac{\gamma_{\max}}{\gamma_{\min}}\left(1-\frac{0.9\epsilon}{\gamma_{\max}}\right)^{k-\tau} \left(1.5\|\Theta_0\|+0.5 b_{\max}\right)^2+\frac{\tilde{\kappa}_2\gamma_{\max}}{0.9\gamma_{\min}}\epsilon\tau.\label{thm:ftb}
\end{align}\label{theorem:ftb}
\end{theorem}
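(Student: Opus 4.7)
The plan is to iterate the one-step drift inequality of Lemma~\ref{lem: finite} starting at time $\tau$, convert the resulting bound on $E[W(\Theta_k)]$ into a bound on $E[\|\Theta_k\|^2]$ using the spectral bounds on $P$, and handle the initial condition $E[W(\Theta_\tau)]$ via Lemma~\ref{lem tau-0}. Setting $a := 1-\tfrac{0.9\epsilon}{\gamma_{\max}}$ and $c := \tilde{\kappa}_2\epsilon^2\tau$, Lemma~\ref{lem: finite} (after using the tower property to remove the conditioning on $\Theta_{k-\tau}$) gives the scalar recursion $E[W(\Theta_{k+1})]\leq a\,E[W(\Theta_k)]+c$ for all $k\geq \tau$. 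The smallness hypothesis on $\epsilon$ ensures $0\leq a<1$, so unrolling from $\tau$ to $k$ and summing a geometric series yields
$$E[W(\Theta_k)]\leq a^{k-\tau}E[W(\Theta_\tau)] + \frac{c}{1-a} = a^{k-\tau}E[W(\Theta_\tau)] + \frac{\tilde{\kappa}_2\gamma_{\max}\epsilon\tau}{0.9}.$$

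Next I would convert this into a bound on $E[\|\Theta_k\|^2]$. Since $P$ is symmetric positive definite, $\gamma_{\min}\|\Theta_k\|^2\leq W(\Theta_k)\leq \gamma_{\max}\|\Theta_k\|^2$. Dividing the displayed bound by $\gamma_{\min}$ recovers the steady-state term $\frac{\tilde{\kappa}_2\gamma_{\max}}{0.9\gamma_{\min}}\epsilon\tau$ of (\ref{thm:ftb}), leaving only the transient term involving $E[W(\Theta_\tau)]$ to bound.

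For the initial condition, I would apply the first inequality of Lemma~\ref{lem tau-0} and the triangle inequality to obtain $\|\Theta_\tau\|\leq (1+2\epsilon\tau)\|\Theta_0\| + 2\epsilon\tau b_{\max}$. The standing assumption $\epsilon\tau\leq 1/4$ from Assumption~1 then gives $\|\Theta_\tau\|\leq 1.5\|\Theta_0\|+0.5 b_{\max}$, and since $\Theta_0$ is deterministic,
$$E[W(\Theta_\tau)]\leq \gamma_{\max}E[\|\Theta_\tau\|^2]\leq \gamma_{\max}\bigl(1.5\|\Theta_0\|+0.5 b_{\max}\bigr)^2.$$
Substituting into the unrolled bound and dividing through by $\gamma_{\min}$ yields exactly the stated inequality.

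There is no real obstacle here: the argument is essentially the standard linear-recursion/geometric-series technique once Lemma~\ref{lem: finite} is in hand. The only points requiring a little care are (i) verifying that $a\in[0,1)$, which follows from $\epsilon\gamma_{\max}\leq 0.05<1$ implied by the hypothesis, and (ii) noting that Lemma~\ref{lem tau-0}'s first inequality (rather than the $\Theta_\tau$-based variants) is the correct one to use for the initial condition because the right-hand side of (\ref{thm:ftb}) is written in terms of $\|\Theta_0\|$.
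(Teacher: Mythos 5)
Your proposal is correct and follows essentially the same route as the paper's proof: unroll the drift inequality of Lemma~\ref{lem: finite} from time $\tau$, sum the geometric series to get the $\frac{\tilde{\kappa}_2\gamma_{\max}}{0.9\gamma_{\min}}\epsilon\tau$ term, bound $E[W(\Theta_\tau)]$ via the first inequality of Lemma~\ref{lem tau-0} together with $\epsilon\tau\leq 1/4$, and convert back to $E[\|\Theta_k\|^2]$ using the eigenvalue bounds on $P$. No discrepancies to note.
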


\begin{proof} By recursively using the previous lemma, we have
\begin{align*}
E\left[W({\Theta}_{k})\right]\leq a^{k-\tau} E\left[W({\Theta_\tau})\right] +b\frac{1-a^{k-\tau}}{1-a}\leq a^{k-\tau} E\left[W({\Theta_\tau})\right] +b\frac{1}{1-a}
\end{align*} where $a=1-\frac{0.9\epsilon}{\gamma_{\max}}$ and $b=\tilde{\kappa}_2\epsilon^2\tau.$ Furthermore, we have
\begin{align*}
E\left[\|\Theta_k\|^2\right]\leq \frac{1}{\gamma_{\min}}E\left[W({\Theta}_{k})\right]\leq \frac{1}{\gamma_{\min}}a^{k-\tau} E\left[W({\Theta_\tau})\right] +b\frac{1}{\gamma_{\min}(1-a)},
\end{align*} and
\begin{align*}
E\left[W({\Theta_\tau})\right]\leq &\gamma_{\max}E\left[\|\Theta_\tau\|^2\right]\\
\leq &\gamma_{\max}E\left[\left(\|\Theta_\tau-\Theta_0\|+\|\Theta_0\|\right)^2\right]\\
\leq& \gamma_{\max}\left((1+2\epsilon\tau)\|\Theta_0\|+2\epsilon\tau b_{\max}\right)^2,
\end{align*} where the last inequality holds due to  \eqref{lem: boundon1norm}. The theorem holds because $\epsilon\tau\leq \frac{1}{4}.$
\end{proof}

\begin{remark}\label{remark: sec moment}
Using (\ref{thm:ftb}), one can obtain estimates on the number of samples required for the mean-square error to be of the same order as its steady-state value (the second term of the upper bound in \eqref{thm:ftb}). For example, if $k\geq \tau +O(\frac{1}{\epsilon}\log \frac{1}{\epsilon}),$ then it is easy to see that $E[||\Theta_k||^2]$  becomes $O(\epsilon \tau).$ This raises an interesting question: for what values of $k$ is $E[||\Theta_k||^{2n}]$ of the order of $(\epsilon\tau)^n$ for $n >  1$? An answer to this question will show that $\|\Theta_k\|$ is $O(\epsilon \tau)$ in a stronger sense. We answer this question in the next section.
\end{remark}

It is straightforward to extend the analysis in this section to the case of diminishing step sizes, see Appendix \ref{app:dimnishing}. Since the focus of the paper is on constant step-size algorithms, we do not discuss issues such as choosing the stepsizes to optimize the rate of convergence.

\subsection{Finite-Time Bounds on the Higher Moments}
Based on the finite-time bound on the mean-square error, we can further derive the bounds on higher moments of $||\Theta_k||^2$ by induction. In this section, we show that that given a constant step-size $\epsilon,$  for any $n=o\left(\frac{1}{\epsilon\tau}\right),$  the $n$-th moment of $\|\Theta_k\|^2$ can be bounded by the $2n$-th moment of some Gaussian random variable. Further, for sufficiently large $n$ ($n=\omega\left(\frac{1}{\epsilon}\right)$), the higher moments will be $\infty$ in steady state, which may appear to be surprising given standard results on Brownian limits of stochastic recursions in the limit $\epsilon\rightarrow 0.$ We present an explanation and some intuition first before we state our main results for higher moments.

It is standard in the study of certain stochastic recursions to use a higher power of the same Lyapunov function used to obtain lower moment bounds to obtain higher moment bounds, see \cite{eryilmaz2012asymptotically, srikant2013communication}, for example. However, the analysis in these references, which use some equivalent of letting $\epsilon\rightarrow 0$ in this paper, the phenomenon which we observe here does not occur: namely that some higher moments do not exist for each non-zero $\epsilon.$ To get some intuition about why certain higher moments may not exist, consider obtaining a recursion for $E\left[||\Theta_k||^{2n}\right]$ from (\ref{generalform}) for the case where $\Theta_k$ is a scalar and $\{X_k\}$ are i.i.d.; it will be of the form
$$E\left[\Theta_{k+1}^{2n}\right]=E\left[(1+\epsilon A(X_k))^{2n}\right] E\left[\Theta_k^{2n}\right]+ \text{ additional terms}.$$
If $E[(1+\epsilon A(X_k))^{2n})]>1,$ this recursion will blow up to infinity depending on the additional terms above. We will present an example in Appendix \ref{app:example} to show that this can indeed happen.

It is also instructive to compare our results to those obtained from Ornstein-Uhlenbeck (O-U) limits of stochastic recursions such as those studied in \cite{hajek1985stochastic, kushner2003stochastic, borkar2009stochastic}; typically it is shown that $\Theta_k/\sqrt{\epsilon}$ converges to an O-U process in the limit as $\epsilon\rightarrow 0.$ One may be tempted to conclude that it may be possible to obtain tail probabilities of the form $\Pr(||\Theta_k||\geq \sqrt{\epsilon}x)$ using the Gaussian steady-state limit of the O-U process. Our analysis here shows that this is incorrect, in general. In fact, the steady-state distribution is not only not sub-Gaussian, it is not even sub-exponential since the higher moments are all infinity for large $n.$ We remark that the constant step-size analysis of other reinforcement learning algorithms have been considered in \cite{borkar2000ode,beck2012error}, but they do not consider TD learning with a linear function approximation nor higher-moment bounds as we consider here.
Now, we present our main result on higher moments.

\begin{theorem}
Assume the step-size $\epsilon$ satisfies the assumption in Theorem \ref{theorem:ftb}. Then for any integer $n$ such that $\epsilon \tau n\leq \frac{1}{4\sqrt{\gamma_{\min}}}\left(\frac{1}{\gamma_{\min}}+b_{\max}\right),$ there exists $k_n$ such that for any $k\geq k_n,$
\begin{align}
E\left[\left\|\Theta_{k}\right\|^{2n}\right]\leq  (2n-1)!!\left(c\tau\epsilon\right)^{n},
\end{align} where
\begin{equation}
k_n=n\tau+\frac{\tilde{c}}{\epsilon}\left(\log\frac{1}{\epsilon}\right)\sum_{m=1}^n\frac{1}{m},
\label{eq:kn}
\end{equation} and both $c$ and $\tilde{c},$ defined in the appendix, are constants independent of $\epsilon$ and $n.$ \hfill{$\square$}
\label{thm:hm}
\end{theorem}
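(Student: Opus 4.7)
The plan is to induct on $n$. The base case $n=1$ is Theorem \ref{theorem:ftb}, which already gives $E[\|\Theta_k\|^2]\le c\tau\epsilon$ after $k\ge\tau+O\bigl(\frac{1}{\epsilon}\log\frac{1}{\epsilon}\bigr)$ steps, matching $(2\cdot 1-1)!!=1$ and the expression for $k_1$. For the inductive step I would use the Lyapunov function $V_n(\Theta)=W(\Theta)^n=(\Theta^\top P\Theta)^n$ (so $V_1=W$), following the outline of the proof of Lemma \ref{lem: finite} but at the $n$-th power.

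The starting point is the binomial expansion
$$V_n(\Theta_{k+1})=\bigl(W(\Theta_k)+D_k\bigr)^n=V_n(\Theta_k)+\sum_{j=1}^n\binom{n}{j}W(\Theta_k)^{n-j}D_k^j,$$
where $D_k:=W(\Theta_{k+1})-W(\Theta_k)=2\Theta_k^\top P(\Theta_{k+1}-\Theta_k)+(\Theta_{k+1}-\Theta_k)^\top P(\Theta_{k+1}-\Theta_k)$, conditioning on $(\Theta_{k-\tau},X_{k-\tau})$. The $j=1$ term is the leading drift and is treated exactly as in Lemma \ref{lem: finite}, except that every factor is weighted by $W^{n-1}(\Theta_k)$: applying the Lyapunov equation and a $W^{n-1}$-weighted analogue of Lemma \ref{lem: linear} (obtained by inserting $W^{n-1}(\Theta_k)$ throughout the proof of Appendix \ref{app:lem-linear} and using Lemma \ref{lem tau-0} to swap $\|\Theta_{k-\tau}\|$ for $\|\Theta_k\|$ at cost $O(\epsilon\tau)$ per factor) yields a contraction $-n\epsilon E[\|\Theta_k\|^2 W^{n-1}(\Theta_k)\mid\cdot]\le-\frac{n\epsilon}{\gamma_{\max}}E[V_n\mid\cdot]$ up to lower-order terms. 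The $j=2$ term, using the pointwise bound $|D_k|\le O(\epsilon\gamma_{\max})(\|\Theta_k\|^2+b_{\max}^2)$ (a consequence of Lemma \ref{lem: quadratic} together with a direct Cauchy--Schwarz bound on the cross term), produces the crucial excitation of order $n(n-1)\epsilon^2 E[V_{n-1}\mid\cdot]$ together with absorbable contributions to $V_n$. For $j\ge 3$, the trivial estimate $|D_k|^j=O((\epsilon\|\Theta_k\|+\epsilon b_{\max})^j)$ shows that the whole tail is absorbed into the first two groups precisely because of the hypothesis $\epsilon\tau n\le\frac{1}{4\sqrt{\gamma_{\min}}}\bigl(\frac{1}{\gamma_{\min}}+b_{\max}\bigr)$.

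Collecting these bounds gives a scalar recursion
$$E[V_n(\Theta_{k+1})]\le\Bigl(1-\frac{\alpha n\epsilon}{\gamma_{\max}}\Bigr)E[V_n(\Theta_k)]+\beta_n\epsilon^2\tau\,E[V_{n-1}(\Theta_k)],\qquad k\ge k_{n-1}+\tau,$$
where $\beta_n$ is a quadratic polynomial in $n$ coming from $\binom{n}{2}$ and the $W^{n-1}$-weighted variance. Unrolling the recursion and substituting the inductive bound $E[V_{n-1}]\le(2n-3)!!(\gamma_{\max}c\tau\epsilon)^{n-1}$, the steady-state piece is $\frac{\beta_n\gamma_{\max}\tau\epsilon}{\alpha n}\cdot(2n-3)!!(\gamma_{\max}c\tau\epsilon)^{n-1}$. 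The key verification is that $\beta_n/n=(2n-1)\cdot\mathrm{const}$, so the combinatorial factor collapses to $(2n-1)(2n-3)!!=(2n-1)!!$ and the $\epsilon$-dependence to $(\gamma_{\max}c\tau\epsilon)^n$, reproducing precisely the Gaussian moment recursion $E[G^{2n}]=(2n-1)\sigma^2E[G^{2n-2}]$ predicted by the O--U intuition. The transient piece $(1-\alpha n\epsilon/\gamma_{\max})^{k-k_{n-1}-\tau}E[V_n(\Theta_{k_{n-1}+\tau})]$ must be pushed below $(\tau\epsilon)^n$; since the per-step decay rate is $\alpha n\epsilon$, this requires an extra burn-in of $\frac{1}{n\epsilon}\log\frac{1}{\epsilon}$, producing exactly the $\frac{1}{n}$ contribution to the harmonic sum in \eqref{eq:kn}, while the accumulated $\tau$ offsets per level produce the leading $n\tau$.

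The main obstacle is the combinatorial bookkeeping in the drift: carrying the $\binom{n}{j}$-weighted products through the expansion while simultaneously establishing the $W^{n-1}$-weighted analogues of Lemmas \ref{lem: quadratic}--\ref{lem: linear}, and verifying that the ratio $\beta_n/n$ really is $(2n-1)$ times a universal constant so that the Gaussian moment formula propagates inductively. This is also precisely where the constraint $\epsilon\tau n=O(1/\sqrt{\gamma_{\min}})$ enters: it is what ensures the $j\ge 3$ tail of the binomial expansion, which grows like $(n\epsilon)^j$, does not disturb the $(2n-1)!!$ bookkeeping and ultimately explains why the claimed bound must break down once $n$ is large compared to $1/\epsilon$, consistent with the blow-up example in Appendix \ref{app:example}.
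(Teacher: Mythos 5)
Your proposal follows essentially the same route as the paper's proof in Appendix \ref{app:thm-hm}: induction on $n$ with the Lyapunov function $(\Theta^\top P\Theta)^n$, a drift bound conditioned on $(\Theta_{k-\tau},X_{k-\tau})$ whose leading term contracts at rate $n\epsilon/\gamma_{\max}$ and whose second-order term excites $E[V_{n-1}]$ at rate $O(n^2\epsilon^2\tau)$, with the inductive hypothesis and the bound $n(2n-3)!!\leq (2n-1)!!$ producing the double-factorial, and the per-level burn-in of order $\frac{1}{n\epsilon}\log\frac{1}{\epsilon}$ producing the harmonic sum in \eqref{eq:kn}. The only organizational differences are that the paper works in the transformed coordinates $\Psi=S\Theta$ (with $S^\top S=P$) and uses an exact second-order Taylor expansion with an intermediate point, which spares it the $j\geq 3$ binomial tail that you handle separately via the hypothesis $\epsilon\tau n=O(1)$.
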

The proof of this theorem can be found in Appendix \ref{app:thm-hm}. The above result holds for $n=O(1/\epsilon\tau).$  In an example in Appendix \ref{app:example}, it is shown that the $n$th moment of $\|\Theta_\infty\|^2$ for $n=\omega\left(\frac{1}{\epsilon}\right)$ does not exist.

\begin{remark}
Since $n=O(1/\epsilon\tau)$ is sufficient for steady-state moments no higher than $n$ to exist and $\tau=O(\log \frac{1}{\epsilon}),$ it is easy to see that $k=O(\frac{1}{\epsilon}\log^2 \frac{1}{\epsilon})$ is sufficient for the bounds in (\ref{eq:kn}) to hold. This is only off by a logarithmic factor from the sufficient condition for the  bound on the mean-square error to reach a value close to its steady-state; see Remark \ref{remark: sec moment}.
\end{remark}

\section{TD Learning}

\label{sec: TD}

We consider an MDP over a finite state-space, denoted by ${\cal S},$ operating under a fixed stationary policy. We assume that the resulting Markov chain is time-homogeneous, irreducible and aperiodic, and so has a unique stationary distribution $\pi$ to which it converges from any initial probability distribution. We will denote the $i^{\rm th}$ component of the stationary distribution by $\pi(i).$  Since the policy is fixed, we will not use any explicit notation to denote the policy and will consider the resulting Markov chain directly. Let $Z_k\in{\cal S}$ denote the state of the Markov chain in time instant $k.$ We are interested in estimating the value function $V,$ associated with the given policy. The value function at state $i$ is given by
\begin{equation}\label{eq: value}
V(i)=E\left[\left.\sum_{k=0}^\infty \alpha^k c(Z_k)\right|Z_0=i\right],
\end{equation}
where $c(i)$ is the instantaneous reward associated with state $i,$ and $\alpha\in [0,1)$ is the discount factor. It is well known that the value function satisfies
\begin{equation}\label{eq: DP equation}
V(i)=\bar{c}(i)+\alpha \sum_j p_{ij} V(j),
\end{equation}
where $\bar{c}(i)=E[c(i)],$ and $p_{ij}$ is the one-step probability of jumping from state $i$ to state $j.$ If $p_{ij}$ were known, $V$ can be obtained by solving the above equation. Here, our goal is to estimate the value function by observing a trace of the Markov chain $\{Z_0, Z_1, Z_2,\ldots\}.$

Since the size of the state space can be very large,  the goal is to approximate the value function by a linear function of suitably chosen feature vectors as follows:
$$V(i)\approx \phi^\top(i)\theta,$$
where $\theta$ is an unknown weight vector to be estimated and $\phi(i)$ is a feature vector associated with state $i.$ If we denote the size of the state space by $N$ and the dimension of $\theta$ by $d,$ then $d\leq N$ and typically $d<< N.$

\subsection{TD(0)}

Consider the following constant step size version of TD(0) to estimate $\theta:$
\begin{align*}
\Theta_{k+1}=&\Theta_k-\epsilon\phi(Z_k)\left(\phi^\top(Z_k)\Theta_k-c(Z_k)-\alpha \phi^\top(Z_{k+1})\Theta_k\right)\\
=&\Theta_k+\epsilon\left(-\phi(Z_k)\left(\phi^\top(Z_k)-\alpha \phi^\top(Z_{k+1})\right)\Theta_k+ c(Z_k)\phi(Z_k)\right),
\end{align*}
where $\Theta_k$ is the estimate of $\theta$ at time instant $k$ and $\epsilon\in(0,1)$ is a constant step size.

Define $\Phi^\top$ to be matrix whose rows are $\phi^\top(i),$ $D$ to be a diagonal matrix with $D(i,i)=\pi(i),$ where $\pi(i)$ is the stationary distribution of the Markov chain $Z$ in state $i,$ $\Gamma$ to be the transition probability matrix of the Markov chain $Z,$ and $\bar{c}=(\bar{c}(1), \cdots,\bar{c}(i), \cdots)^\top.$ For the case of diminishing step sizes, by verifying the conditions in \cite{benveniste2012adaptive}, it was shown in \cite{tsitsiklis1997analysis} that the algorithm tracks the ODE
\begin{equation}\label{eq: ODE0}
\dot{\theta}=\tilde{A}\theta+\tilde{b},
\end{equation}
and converges to its unique equilibrium point $\theta^*$ under the assumption that $\Phi$ is full rank,
where
\begin{align*}
\tilde{A}=-\Phi D \left(\Phi^\top -\alpha \Gamma \Phi^\top\right)\quad \hbox{ and }\quad \tilde{b}=\Phi D \bar{c}.
\end{align*}
Now by centering the equilibrium point to zero (i.e., $\Theta\leftarrow \Theta-\theta^*$) and defining
\begin{align*}
X_k=&\left(Z_k, Z_{k+1}\right)^\top\\
A(X_k)=&-\phi(Z_k)\left(\phi^\top(Z_k)-\alpha \phi^\top(Z_{k+1})\right)\\
b(X_k)=&c(Z_k)\phi(Z_k)-A(X_k)\theta^*\\
\bar{A}=&\tilde{A}\\
\bar{b}=&0,
\end{align*}
TD(0) can be written as a special case of the general stochastic approximation algorithm form \eqref{generalform}.

To apply the finite-time bound established in Theorem \ref{theorem:ftb}, we next verify the assumptions presented in Section \ref{sec: assump}.
\begin{itemize}
\item {\bf Assumption 1:}
Note that
\begin{align*}
&\|E[b(X_k)|X_0=(Z_0, Z_1)=(z_0, z_1)]\|\\
=&\left\|\sum_i\left(\Pr\left(Z_k=i|Z_0=z_0, Z_1=z_1\right)-\pi_i\right)\left(\bar{c}(i)\phi(i)+\phi(i)\left(\phi^\top(i)-\alpha P_{ij}\phi^\top(j)\right)\theta^*\right)\right\|\\
\leq& b_{\max}\sum_i|\pi_i-\Pr\left(Z_k=i|Z_1=z_1\right)|,
\end{align*} where
$b_{\max}=c_{\max}\phi_{\max}+2\phi^2_{\max}\theta^*,$
and
\begin{align*}
&\|\bar{A}-E[A(X_k)|X_0=(Z_0, Z_1)=(z_0, z_1)]\|\\
=&\left\|\sum_i \left(\Pr\left(Z_k=i|Z_1=z_1\right)-\pi(i)\right)\phi(i)\left(\phi^\top(i)-\alpha P_{ij}\phi^\top(j)\right)\right\|\\
\leq& 2\phi^2_{\max}\sum_i|\pi_i-\Pr\left(Z_k=i|Z_1=z_1\right)|.
\end{align*}
Since $\{Z_k\}$ is a finite state, aperiodic and irreducible Markov chain, it has a geometric mixing rate \cite{bremaud2013markov}, so Assumption 1 holds.

\item {\bf Assumption 2:} To satisfy Assumption 2, we assume $\max_{i\in{\cal S}} \|\phi(i)\|=\phi_{\max} < \infty$ and $\max_{i\in{\cal S}} E[c(i)]=c_{\max}<\infty,$   which implies
\begin{align*}
\|A(X_k)\|=&\left\|-\phi(Z_k)\left(\phi^\top(Z_k)-\alpha \phi^\top(Z_{k+1})\right)\right\|\\
\leq&\|\phi(Z_k)\|^2+\alpha\|\phi(Z_k)\|\|\phi(Z_{k+1})\|\\
\leq& (1+\alpha)\phi_{\max}^2<\infty\\
\|b(X_k)\|\leq& b_{\max}<\infty.
\end{align*} By normalizing the feature vectors, we can have $\phi_{\max}\leq \frac{1}{\sqrt{1+\alpha}},$ which implies that
\begin{align*}
\|A(X_k)\|\leq  (1+\alpha)\phi_{\max}^2\leq 1.
\end{align*} So Assumption 2 holds.

\item {\bf Assumptions 3:} The assumption holds when $\Phi$ is full rank \cite{tsitsiklis1997analysis}. Note that \cite{tsitsiklis1997analysis} proves that $\bar{A}$ is a negative definite matrix (but not necessarily symmetric), not just Hurwitz. In this special case, in addition to the Lyapunov function used in Theorem \ref{theorem:ftb}, one can also use a different Lyapunov function, and follow the rest of the steps in our analysis of general linear stochastic approximation algorithms, to obtain finite-time bounds. We present the analysis for the mean square error in Appendix \ref{sec: neg def}.
\end{itemize}

In summary, the finite-time bound applies when $\Phi$ is full rank, and $\phi_{\max}$ and $c_{\max}$ are bounded.

\subsection{TD($\lambda$)}

In TD($\lambda$), instead of updating the weight vector in the direction of the feature vector of the current state, i.e., $\phi(Z_k),$ one uses the direction of the eligibility trace which is defined to be
$$\varphi_k=(\alpha\lambda) \varphi_{k-1}+\phi(Z_k).$$ In other words,
\begin{align*}
\Theta_{k+1}=\Theta_k-\epsilon\phi(Z_k)\left(\phi^\top(Z_k)\Theta_k-c(Z_k)-\alpha \phi^\top(Z_{k+1})\right)\varphi_k.
\end{align*} Note that $X_k=(Z_k, Z_{k+1}, \varphi_k)$ is a Markov chain.   The algorithm is similar to TD(0) except that now the state-space of the underlying Markov chain is uncountable due to the presence of $\varphi_k.$

The ODE for TD($\lambda$) in the form of $\dot{\theta}=\tilde{A}\theta+\tilde{b}$ (Lemma 6.5 in \cite{bertsekas1996neuro}) has
\begin{align*}
\tilde{A}=\phi^\top D(U-I)\phi\quad\hbox{ and }\quad
\tilde{b}=\phi^\top D q \tilde{c},
\end{align*} where
\begin{align*}
U=(1-\lambda)\sum_{j=0}^\infty \lambda^j(\alpha \Gamma)^{j+1}\quad \hbox{ and }\quad
\tilde{c}=\sum_{j=0}^\infty (\alpha \lambda \Gamma)^j\bar{c}.
\end{align*}

By centering the equilibrium point to zero (i.e. $\Theta\leftarrow \Theta-\theta^*$) and defining
\begin{align*}
A(X_k)=&-\varphi_k\left(\phi^\top(Z_k)-\alpha \phi^\top(Z_{k+1})\right)\\
b(X_k)=&c(Z_k)\varphi_k-A(X_k)\theta^*\\
\bar{A}=&\tilde{A}\\
\bar{b}=&0,
\end{align*} the update of $\Theta_k$ can be written in the form of the general stochastic approximation in Theorem \ref{theorem:ftb}
$$\Theta_{k+1}=\Theta_k+\epsilon\left(A(X_k)\Theta_k+b(X_k)\right).$$
We next verify the assumptions presented in Section \ref{sec: assump}.
\begin{itemize}
\item {\bf Assumption 1:} Given that $\{Z_k\}$ is a finite-state, aperiodic and irreducible Markov chain, geometric mixing  holds according to Lemma 6.7  in \cite{bertsekas1996neuro}.

\item {\bf Assumption 2:} We note that $$\|\varphi_k\|\leq (\alpha\lambda) \|\varphi_{k-1}\|+\|\phi(Z_k)\|\leq (\alpha\lambda) \|\varphi_{k-1}\|+\phi_{\max},$$ which implies that
\begin{align*}
\|\varphi_k\|\leq (\alpha\lambda)^k \|\phi(Z_0)\|+\phi_{\max}\sum_{j=0}^{k-1}(\alpha\lambda)^j\leq \frac{\phi_{\max}}{1-\alpha\lambda}<\infty,
\end{align*}
\begin{align*}
\|A(X_k)\|=&\left\|-\varphi_k\left(\phi^\top(Z_k)-\alpha \phi^\top(Z_{k+1})\right)\right\|\\
\leq&\|\varphi_k\|\left(\|\phi(Z_k)\|+\alpha\|\phi(Z_{k+1})\|\right)\\
\leq& \frac{(1+\alpha)}{1-\alpha\lambda}\phi_{\max}^2<\infty,
\end{align*} and
\begin{align*}
\|b(X_k)\|\leq c_{\max}\|\varphi_k\|+\|A(X_k)\|\theta^*\leq \left(c_{\max}+(1+\alpha)\phi_{\max}\theta^*\right)\frac{\phi_{\max}}{1-\alpha\lambda}.
\end{align*}
Using feature normalization, we can assume $\phi_{\max}\leq \sqrt{\frac{1-\alpha\lambda}{1+\alpha}},$ which implies that
$\|A(X_k)\|\leq  1$ and $\|b(X_k)\|<\infty.$
So Assumption 2 holds.

\item {\bf Assumptions 3:} The assumption holds when $\Phi$ is full rank \cite{tsitsiklis1997analysis}.
\end{itemize}

\section{Conclusions}\label{sec: conclusions}

In this paper, we solve the open problem of obtaining finite-time bounds on the performance of temporal difference learning algorithms using linear function approximation and a constant step-size, without making i.i.d. noise assumptions or requiring a projection step to keep the parameters bounded. Our approach is to consider a more general linear stochastic approximation model and analyze it by studying the drift of a Lyapunov function motivated by Stein's method. Our analysis shows that the moments (up to a certain order) of the square of the 2-norm of the approximation error can be upper-bounded by the moments of a Gaussian random variable; and beyond a certain order, the higher moments become unbounded in steady-state. Our results are also easily extendable to obtain finite-time moment bounds for time-varying step sizes as well.

\appendix

\section{Proof of Lemma \ref{lem tau-0}}
\label{app:lem-tau-0}
	We first have
	\begin{align}
	\|\Theta_{k+1}-\Theta_{k}\| \leq \epsilon \left\|A(X_k)\Theta_k+b(X_k)\right\|\leq\epsilon \left(\|\Theta_k\|+b_{\max}\right),\label{eq:diff}
	\end{align} which implies that
	\begin{align*}
	\|\Theta_{k+1}\| \leq \left(1+\epsilon\right)\|\Theta_k\|+\epsilon b_{\max}.
	\end{align*}
	
	By recursively using the inequality above, we have
	\begin{align*}
	\left\|\Theta_{k}\right\|\leq \left(1+\epsilon\right)^{k}\|\Theta_0\|+\epsilon b_{\max}\sum_{j=0}^{k-1} \left(1+\epsilon\right)^j,
	\end{align*} which is an increasing function in $k.$ Therefore, for any $1\leq k\leq \tau,$ we have
	\begin{align*}
	\left\|\Theta_{k}\right\|\leq & \left(1+\epsilon \right)^\tau \|\Theta_0\|+\epsilon b_{\max}\sum_{j=0}^{\tau-1} \left(1+\epsilon \right)^j\\
	=&\left(1+\epsilon\right)^\tau \|\Theta_0\|+\epsilon b_{\max}\frac{\left(1+\epsilon \right)^\tau-1}{\epsilon}.
	\end{align*}

	Next we want to use the following bound \begin{equation}\left(1+x\right)^\tau \leq 1+2x\tau\label{eq:bound}\end{equation}
	for small $x.$
	Note that  $$\left.\left(1+x\right)^\tau \right|_{x=0} = \left.1+2x\tau\right|_{x=0};$$ and  when $x\leq \frac{\log 2}{\tau-1},$
	$$\frac{\partial}{\partial x}\left(1+x\right)^\tau  =\tau(1+x)^{\tau-1}\leq_{(a)} \tau e^{x(\tau-1)}\leq_{(b)} 2\tau = \frac{\partial}{\partial x} (1+2x\tau)$$ where inequality $(a)$ holds because $\log (1+x)\leq x$ for $x\geq 0,$ and inequality $(b)$ holds when $x\leq \frac{\log 2}{\tau-1}.$
	
	Since we have assumed that $\epsilon\tau\leq 1/4,$ we have $\epsilon\leq\frac{1}{4\tau}\leq \frac{\log 2}{(\tau-1)},$ and so we can apply the bound (\ref{eq:bound}) to obtain  $\forall 1\leq k\leq \tau,$
	\begin{align}
	\left\|\Theta_{k}\right\|\leq \left(1+2\epsilon \tau \right) \|\Theta_0\|+2\epsilon \tau b_{\max}\leq 2 \|\Theta_0\|+2\epsilon \tau b_{\max}, \label{eq:thetabound}
	\end{align} where the last inequality holds because $\epsilon \tau \leq\frac{1}{4}.$

	Now from \eqref{eq:diff} and \eqref{eq:thetabound}, we have \begin{align*}
	\|\Theta_{\tau}-\Theta_0\|\leq &\sum_{k=0}^{\tau-1} \|\Theta_{k+1}-\Theta_k\|\\
	\leq& \epsilon \sum_{k=0}^{\tau-1}\|\Theta_k\|+\epsilon \tau b_{\max}\\
	\leq&   \epsilon \tau \left( 2\|\Theta_0\|+2\epsilon \tau b_{\max}\right)+\epsilon \tau b_{\max}\\
	=&2\epsilon \tau  \|\Theta_0\| +2\epsilon^2\tau^2 b_{\max}+\epsilon \tau b_{\max}\\
	\leq&2\epsilon \tau  \|\Theta_0\| +2\epsilon \tau b_{\max},
	\end{align*} where the last inequality holds because
	\begin{align*}
	2\epsilon^2\tau^2 b_{\max}\leq &\frac{1}{2} \epsilon \tau b_{\max}
	\end{align*} due to the choice of $\epsilon$ which satisfies $\epsilon \tau\leq \frac{1}{4}.$

	From the inequality above, we further have
	\begin{align*}
	\|\Theta_{\tau}-\Theta_0\| \leq&2\epsilon \tau  \|\Theta_0\| +2\epsilon \tau b_{\max}\\
	\leq&2\epsilon \tau  \|\Theta_\tau-\Theta_0\| +2\epsilon \tau  \|\Theta_\tau\| +2\epsilon \tau b_{\max}\\
	\leq&\frac{1}{2} \|\Theta_\tau-\Theta_0\| +2\epsilon \tau  \|\Theta_\tau\| +2\epsilon \tau b_{\max},
	\end{align*} which implies that
	\begin{align*}
	\|\Theta_{\tau}-\Theta_0\| \leq 4\epsilon \tau  \|\Theta_\tau\| +4\epsilon \tau  b_{\max}.
	\end{align*}  This further implies that
	\begin{align*}
	\|\Theta_{\tau}-\Theta_0\|^2
	\leq&32 \epsilon^2\tau^2\|\Theta_\tau\|^2+32 \epsilon^2 \tau^2  b_{\max}^2,
	\end{align*} because $(a+b)^2\leq 2a^2+2b^2.$

\section{Proof of Lemma \ref{lem: quadratic}}
\label{app:lem-quadratic}
Note that
\begin{align*}
&\left|(\Theta_{k+1}-\Theta_{k})^\top P(\Theta_{k+1}-\Theta_{k})\right|\\
\leq &\gamma_{\max}\left\|\Theta_{k+1}-\Theta_{k}\right\|^2\\
\leq& \epsilon^2\gamma_{\max}\left\|A(X_k)\Theta_k +b(X_{k})\right\|^2\\
\leq&\epsilon^2\gamma_{\max}\left(\left\|A(X_k)\Theta_k\right\| +\left\|b(X_{k})\right\|\right)^2\\
\leq& \epsilon^2\gamma_{\max}\left(\left\|\Theta_k\right\| +b_{\max}\right)^2\\
\leq &2\epsilon^2\gamma_{\max}\left(\left\|\Theta_k\right\|^2+b_{\max}^2\right),
\end{align*}
where the second-to-last inequality uses the fact that $\|A(X_k)\|\leq 1.$

\section{Proof of Lemma \ref{lem: linear}}
\label{app:lem-linear}
We prove Lemma \ref{lem: linear} for the case $k=\tau,$ the proof for the general case is essentially identical.  We first note that
\begin{align}
&E\left[\left.\Theta_\tau^\top P \left(\bar{A}\Theta_\tau-\frac{1}{\epsilon }\left({\Theta}_{\tau+1}-{\Theta_\tau}\right)\right)\right|\Theta_0, X_0\right]\nonumber\\
=&E\left[\left.\Theta_\tau^\top P\left(\bar{A}\Theta_\tau-\left(A(X_\tau)\Theta_\tau+b(X_\tau)\right)\right)\right|\Theta_0, X_0\right]\nonumber\\
=&E\left[\left.\Theta_\tau^\top P \left(\bar{A}\Theta_\tau-A(X_\tau)\Theta_\tau\right)\right|\Theta_0, X_0\right]\label{eq:part1}\\
&-E\left[\left.\Theta_\tau^\top P b(X_\tau)\right|\Theta_0, X_0\right].\label{eq:part2}
\end{align}

We first consider \eqref{eq:part1}:
\begin{align*}
&\eqref{eq:part1}\\
=& E\left[\left.\Theta_\tau^\top P \left(\bar{A}\Theta_\tau-A(X_\tau)\Theta_\tau\right)\right|\Theta_0, X_0\right]\\
=& E\left[\left.\Theta_0^\top P \left(\bar{A}\Theta_0-A(X_\tau)\Theta_0\right)\right|\Theta_0, X_0\right]+E\left[\left.(\Theta_\tau-\Theta_0)^\top P \left(\bar{A}-A(X_\tau)\right)\left(\Theta_\tau-\Theta_0\right)\right|\Theta_0, X_0\right]+\\
&E\left[\left.(\Theta_\tau-\Theta_0)^\top P \left(\bar{A}-A(X_\tau)\right)\Theta_0\right|\Theta_0, X_0\right]+E\left[\left.\Theta_0^\top P \left(\bar{A}-A(X_\tau)\right)\left(\Theta_\tau-\Theta_0\right)\right|\Theta_0, X_0\right].
\end{align*}
We next analyze each of the terms above. First we have
\begin{align}
&\left|E\left[\left.\Theta_0^\top P \left(\bar{A}-A(X_\tau)\right)\Theta_0\right|\Theta_0, X_0\right]\right|\nonumber\\
=& \left|\Theta_0^\top P \left(\bar{A}-E\left[\left.A(X_\tau)\right|X_0\right]\right)\Theta_0\right|\nonumber\\
\leq&\|\Theta_0^\top P\|  \left\|\left(\bar{A}-E\left[\left.A(X_\tau)\right| X_0\right]\right)\Theta_0\right\|\nonumber\\
\leq_{(a)} &\epsilon \gamma_{\max} \|\Theta_0\|^2\label{eq:sub1}
\end{align} where inequality (a) holds due to the assumption on the mixing time $\tau.$ Next,
\begin{align}
&\left|E\left[\left.(\Theta_\tau-\Theta_0)^\top P \left(\bar{A}-A(X_\tau)\right)\left(\Theta_\tau-\Theta_0\right)\right|\Theta_0, X_0\right]\right|\nonumber\\
\leq& E\left[\left.\left\|(\Theta_\tau-\Theta_0)^\top P\right\| \left\|\left(\bar{A}-A(X_\tau)\right)\left(\Theta_\tau-\Theta_0\right)\right\|\right|\Theta_0, X_0\right]\nonumber\\
\leq&\gamma_{\max}E\left[\left. (\|\bar{A}\|+\|A(X_\tau)\|)\left\|\Theta_\tau-\Theta_0\right\|^2\right|\Theta_0, X_0\right]\nonumber\\
\leq&2\gamma_{\max}E\left[\left.\left\|\Theta_\tau-\Theta_0\right\|^2\right|\Theta_0, X_0\right].\label{eq:sub2}
\end{align}

Finally,
\begin{align}
&\left|E\left[\left.(\Theta_\tau-\Theta_0)^\top P \left(\bar{A}-A(X_\tau)\right)\Theta_0\right|\Theta_0, X_0\right]\right|+\left|E\left[\left.\Theta_0^\top P \left(\bar{A}-A(X_\tau)\right)\left(\Theta_\tau-\Theta_0\right)\right|\Theta_0, X_0\right]\right|\nonumber\\
\leq &4\gamma_{\max} \|\Theta_0\|E\left[\left.\|\Theta_\tau-\Theta_0\| \right|\Theta_0, X_0\right]\nonumber\\
\leq_{(a)}&8\epsilon\tau \gamma_{\max} \|\Theta_0\|\left(\|\Theta_0\|+b_{\max}\right)\nonumber\\
\leq&8\epsilon\tau \gamma_{\max} \|\Theta_0\|^2 + 8\epsilon\tau \gamma_{\max}\|\Theta_0\| b_{\max}\label{eq:sub3}
\end{align}
 where inequality (a) follows from Lemma \ref{lem tau-0}.

Next we consider \eqref{eq:part2} and use the definition of the mixing time to obtain
\begin{align}
&\left|-E\left[\left.\Theta_\tau^\top P b(X_\tau)\right|\Theta_0, X_0\right]\right|\nonumber\\
=&\left|-E\left[\left.\Theta_0^\top P b(X_\tau)\right|\Theta_0, X_0\right]-E\left[\left.(\Theta^\top_\tau-\Theta_0^\top) P b(X_\tau)\right|\Theta_0, X_0\right]\right|\nonumber\\
\leq&\epsilon\gamma_{\max}\|\Theta_0\|+\gamma_{\max} b_{\max} E\left[\left. \|\Theta_\tau-\Theta_0\| \right| \Theta_0, X_0\right]\nonumber\\
\leq&\epsilon\gamma_{\max}\|\Theta_0\|+2\epsilon\tau\gamma_{\max} b_{\max} \left(\|\Theta_0\|+b_{\max}\right).\label{eq:s2}
\end{align}

By combining the bounds (\ref{eq:sub1})-(\ref{eq:s2}), we have
\begin{align}
&E\left[\left.\Theta_\tau^\top P \left(\bar{A}\Theta_\tau-\frac{1}{\epsilon }\left({\Theta}_{\tau+1}-{\Theta_\tau}\right)\right)\right|\Theta_0, X_0\right]\nonumber\\
\leq& \left(\epsilon\gamma_{\max}+8\epsilon\tau\gamma_{\max}\right)\|\Theta_0\|^2+\left(\epsilon \gamma_{\max}+10\epsilon\tau\gamma_{\max}b_{\max}\right)\|\Theta_0\|+2\epsilon\tau\gamma_{\max}b^2_{\max}\nonumber\\
&+2\gamma_{\max}E\left[\left.\left\|\Theta_\tau-\Theta_0\right\|^2\right|\Theta_0, X_0\right]\nonumber\\
\leq_{(b)}&14 \epsilon\tau\gamma_{\max}(b_{\max}+1)\|\Theta_0\|^2+7\epsilon\tau\gamma_{\max}(b_{\max}+1)^2+2\gamma_{\max}E\left[\left.\left\|\Theta_\tau-\Theta_0\right\|^2\right|\Theta_0, X_0\right]\nonumber\\
\leq_{(c)}&14 \epsilon\tau\gamma_{\max}(b_{\max}+1)E\left[\left. \|\Theta_\tau\|^2 \right| \Theta_0, X_0\right]+7\epsilon\tau\gamma_{\max}(b_{\max}+1)^2\nonumber\\
&+6\gamma_{\max}(b_{\max}+1)E\left[\left.\left\|\Theta_\tau-\Theta_0\right\|^2\right|\Theta_0, X_0\right]\nonumber\\
\leq_{(d)}&14 \epsilon\tau\gamma_{\max}(b_{\max}+1)E\left[\left. \|\Theta_\tau\|^2 \right| \Theta_0, X_0\right]+7\epsilon\tau\gamma_{\max}(b_{\max}+1)^2\nonumber\\
&+6\gamma_{\max}(b_{\max}+1)\left( 32 \epsilon^2 \tau^2E\left[\left.\|\Theta_\tau\|^2\right| \Theta_0, X_0\right]+32\epsilon^2\tau^2 b^2_{\max}\right)\nonumber\\
\leq_{(e)}&62 \epsilon\tau\gamma_{\max}(b_{\max}+1)E\left[\left. \|\Theta_\tau\|^2 \right| \Theta_0, X_0\right]+55\epsilon\tau\gamma_{\max}(b_{\max}+1)^3,\label{eq:sub4}
\end{align}
where inequality (b) holds by noting that $2\|\Theta_0\| \leq 1+ \|\Theta_0\|^2,$ (c) follows from the triangle inequality, (d) follows from Lemma \ref{lem tau-0}, and (e) uses the fact $\epsilon \tau \leq 1/4.$

%The result follows by using the bounds (\ref{eq:sub3}) and (\ref{eq:sub4}) in (\ref{eq:part1})-(\ref{eq:part2}).

\section{Proof of Theorem \ref{thm:hm}}
\label{app:thm-hm}
We will use induction to prove this theorem. Suppose the bound holds for $n-1,$ and consider $n.$ To simplify notation, we consider the system starting from $k_{n-1}.$ In other words, in the following analysis, the $k$th iteration is the $(k+k_{n-1})$th iteration of the original system. To simplify our notation, we assume $b_{\max}\geq 1$ without the loss of generality. Since $P$ is a real positive definite matrix, there exists a real positive definite matrix $S$ such that $S^\top S=P,$ and the eigenvalues of $S$ are the square roots of eigenvalues of $P.$ We define
$\Psi=S\Theta,$ so $\Theta^\top P \Theta=\Psi^\top \Psi$ and $\Theta=S^{-1}\Psi.$ Note that $S^\top=S.$

We consider Lyapunov function
\begin{align*}
W_n(\psi)=\left(\psi^\top \psi\right)^n.
\end{align*} The gradient and Hessian of $W_n(\psi)$ are given below:
\begin{align*}
\triangledown W_n(\psi)=&2n\left(\psi^\top \psi\right)^{n-1}\psi\\
\triangledown^2 W_n(\psi)=&4n(n-1)\left(\psi^\top \psi\right)^{n-2}\psi \psi^\top +2n\left(\psi^\top \psi\right)^{n-1}I.
\end{align*}
Taylor's Theorem states $$W_n(\hat{\psi})=W_n(\psi)+(\hat{\psi}-\psi)^\top \triangledown W_n(\psi)+\frac{1}{2}(\hat{\psi}-\psi)^\top \triangledown^2W_n(\tilde{\psi})(\hat{\psi}-\psi),$$ where $\tilde{\psi}=h\psi+(1-h)\hat{\psi}$ for some $h\in[0,1].$ Therefore, we have
\begin{align*}
&\left(\Psi_{k+1}^\top  \Psi_{k+1}\right)^n\\
=&\left(\Psi_{k}^\top  \Psi_{k}\right)^n+(\Psi_{k+1}-\Psi_k)^\top 2n\left(\Psi^\top_k \Psi_k\right)^{n-1} \Psi_k\\
&+(\Psi_{k+1}-\Psi_k)^\top \left(4n(n-1)\left(\tilde{\Psi}^\top \tilde{\Psi}\right)^{n-2}\tilde{\Psi} \tilde{\Psi}^\top +2n\left(\tilde{\Psi}^\top \tilde{\Psi}\right)^{n-1} I  \right)(\Psi_{k+1}-\Psi_k),
\end{align*} where $\tilde{\Psi}=h\Psi_k+(1-h)\Psi_{k+1}$ for some $h\in[0,1],$ which implies that
\begin{align}
&\left(\Psi_{k+1}^\top \Psi_{k+1}\right)^n\nonumber\\
=&\left(\Psi_{k}^\top  \Psi_{k}\right)^n+2(\epsilon S\bar{A}\Theta_k)^\top n\left(\Psi^\top_k \Psi_k\right)^{n-1} \Psi_k\label{eq:hm-1}\\
&+(\Psi_{k+1}-\Psi_k-\epsilon S\bar{A}\Theta_k)^\top 2n\left(\Psi^\top_k \Psi_k\right)^{n-1} \Psi_k\label{eq:hm-2}\\
&+(\Psi_{k+1}-\Psi_k)^\top \left(4n(n-1)\left(\tilde{\Psi}^\top \tilde{\Psi}\right)^{n-2}\tilde{\Psi} \tilde{\Psi}^\top +2n\left(\tilde{\Psi}^\top \tilde{\Psi}\right)^{n-1} I \right)(\Psi_{k+1}-\Psi_k).\label{eq:hm-3}
\end{align}

We will analyze each of the three terms above in the following subsections.
\subsection{Bounding \eqref{eq:hm-1}}
First,  from the Lyapunov equation, we obtain $$2(S\bar{A}S^{-1}\psi)^\top \psi=2\theta^\top \bar{A}^\top P \theta=-\theta^\top \theta=-\psi^\top P^{-1}\psi,$$ which implies that
\begin{align}
\eqref{eq:hm-1}=&\left(\Psi_{k}^\top \Psi_{k}\right)^n-\epsilon n\left(\Psi^\top_k \Psi_k\right)^{n-1} \Psi_k^\top P^{-1}\Psi_k\nonumber\\
=&\left(\Psi_{k}^\top\Psi_{k}\right)^{n-1}\left(\Psi_{k}^\top \Psi_{k}-\epsilon n\Psi_k^\top P^{-1}\Psi_k\right)\nonumber\\
\leq& \left(1-\frac{\epsilon n}{\gamma_{\max}}\right)\left(\Psi_{k}^\top \Psi_{k}\right)^{n}.
\end{align}

\subsection{Bounding \eqref{eq:hm-2}}
Next we have
\begin{align}
\eqref{eq:hm-2}=&-2\epsilon (A(X_k)\Theta_k+b(X_k)- \bar{A}\Theta_k)^\top n\left(\Psi^\top_k \Psi_k\right)^{n-1} S\Psi_k\nonumber\\
=&-2\epsilon (A(X_k)\Theta_0+b(X_k)- \bar{A}\Theta_0)^\top n\left(\Psi^\top_0 \Psi_0\right)^{n-1} S\Psi_0\label{eq:hm-l-1}\\
&-2\epsilon (\left(A(X_k)- \bar{A}\right)\left(\Theta_k-\Theta_0\right))^\top n\left(\Psi^\top_0 \Psi_0\right)^{n-1} S\Psi_0\label{eq:hm-l-2}\\
&-2\epsilon (A(X_k)\Theta_k+b(X_k)- \bar{A}\Theta_k)^\top \left(n\left(\Psi^\top_k \Psi_k\right)^{n-1} S\Psi_k- n\left(\Psi^\top_k \Psi_k\right)^{n-1} S\Psi_0\right)\label{eq:hm-l-3}\\
&-2\epsilon (A(X_k)\Theta_k+b(X_k)- \bar{A}\Theta_k)^\top \left(n\left(\Psi^\top_k \Psi_k\right)^{n-1} S\Psi_0- n\left(\Psi^\top_0 \Psi_0\right)^{n-1} S\Psi_0\right).\label{eq:hm-l-4}
\end{align}

We recall the following inequalities
\begin{align*}
\|\Theta_k-\Theta_0\|\leq& 2\epsilon k \left(\|\Theta_0\|+b_{\max}\right)\\
\left\|A(X_k)\Theta_k+b(X_k)- \bar{A}\Theta_k\right\|\leq& 2 \|\Theta_k\|+b_{\max}.
\end{align*} Also according to Taylor's theorem, we have
\begin{align*}
\left\|\left(\Psi^\top_k \Psi_k\right)^{n-1} -\left(\Psi^\top_0 \Psi_0\right)^{n-1}\right\|\leq & 4\epsilon k(n-1)\sqrt{\gamma_{\max}}\left(\|\Theta_0\|+b_{\max}\right)\|\hat{\Psi}\|^{2n-3},
\end{align*} where $\|\hat\Psi\|=\hat{h}\|\Psi_0\|+(1-\hat{h})\|\Psi_k\|$ for some $\hat{h}\in[0,1].$

We next analyze \eqref{eq:hm-l-1}-\eqref{eq:hm-l-4}. First,
\begin{align*}
\eqref{eq:hm-l-1}=2\epsilon  n\left(\Psi^\top_0 \Psi_0\right)^{n-1} \left(((A(X_k)-\bar{A})\Theta_0)^\top S\Psi_0+b^\top(X_k)S\Psi_0\right).
\end{align*} Based on the mixing assumption, we have
\begin{align*}
E[|\eqref{eq:hm-l-1}|]=&E[E[|\eqref{eq:hm-l-1}||\Theta_0]]\\
\leq &E\left[2\epsilon^2n \left(\sqrt{\frac{\gamma_{\max}}{\gamma_{\min}}}\|\Psi_0\|^{2n}+\sqrt{\gamma_{\max}}\|\Psi_0\|^{2n-1}\right)\right]\\
\leq &E\left[\epsilon^2n \left(\left(2\sqrt{\frac{\gamma_{\max}}{\gamma_{\min}}}+\sqrt{\gamma_{\max}}\right)\|\Psi_0\|^{2n}+\sqrt{\gamma_{\max}}\|\Psi_0\|^{2n-2}\right)\right].
\end{align*}

Next,
\begin{align*}
|\eqref{eq:hm-l-2}|\leq &8\sqrt{\gamma_{\max}}\epsilon^2 kn\left(\left\|\Theta_0\right\|+b_{\max}\right)\|\Psi_0\|^{2n-1}\\
\leq &8\sqrt{\gamma_{\max}}\epsilon^2 kn\left(\frac{1}{\sqrt{\gamma_{\min}}}\left\|\Psi_0\right\|+b_{\max}\right)\|\Psi_0\|^{2n-1}\\
= &8\sqrt{\gamma_{\max}}\epsilon^2 kn\left(\frac{1}{\sqrt{\gamma_{\min}}}\left\|\Psi_0\right\|^{2n}+b_{\max}\|\Psi_0\|^{2n-1}\right)\\
\leq &4\sqrt{\gamma_{\max}}\epsilon^2 kn\left(\left(\frac{2}{\sqrt{\gamma_{\min}}}+b_{\max}\right)\left\|\Psi_0\right\|^{2n}+b_{\max}\|\Psi_0\|^{2n-2}\right),
\end{align*} and
\begin{align*}
|\eqref{eq:hm-l-3}|\leq & 4{\gamma_{\max}}\epsilon^2 k n (2\|\Theta_k\|+b_{\max})^2\|\Psi_k\|^{2n-2}\\
=& 4{\gamma_{\max}}\epsilon^2 k n  \left(4\|\Theta_k\|^{2}+4b_{\max}\|\Theta_k\|+b_{\max}^2\right)\|\Psi_k\|^{2n-2}\\
\leq & 4{\gamma_{\max}}\epsilon^2 k n \left((4+2b_{\max})\|\Theta_k\|^{2}+(b_{\max}^2+2b_{\max})\right)\|\Psi_k\|^{2n-2}\\
\leq & 4{\gamma_{\max}}\epsilon^2 k n \left(\frac{4+2b_{\max}}{\gamma_{\min}}\|\Psi_k\|^{2n}+(b_{\max}^2+2b_{\max})\|\Psi_k\|^{2n-2}\right).
\end{align*}

Finally, we have
\begin{align*}
|\eqref{eq:hm-l-4}|\leq& 8\gamma_{\max}\epsilon^2k n(n-1)(2\|\Theta_k\|+b_{\max}) (\|\Theta_0\|+b_{\max})\|\Psi_0\|\|\hat{\Psi}\|^{2n-3}.
\end{align*}
According to the definition of $\hat\Psi,$ we obtain $$\|\hat{\Psi}\|^{2n-3}\leq \|{\Psi_0}\|^{2n-3}+\|{\Psi_k}\|^{2n-3}.$$  Furthermore, note that $$|x|^a|y|^b\leq |x|^{a+b}+|y|^{a+b}$$ for any $a>0$ and $b>0.$
 Therefore, we have
\begin{align*}
|\eqref{eq:hm-l-4}|\leq& 8\gamma_{\max}\epsilon^2 k n(n-1)\left(\frac{4+3b_{\max}}{\gamma_{\min}}\left(\|\Psi_k\|^{2n}+\|\Psi_0\|^{2n}\right)+\frac{2b^2_{\max}\gamma_{\min}+3b_{\max}}{\gamma_{\min}}\left(\|\Psi_0\|^{2n-2}+\|\Psi_k\|^{2n-2}\right)\right).
\end{align*}

\subsection{Bounding \eqref{eq:hm-3}}
We now consider \eqref{eq:hm-3}, and have
\begin{align*}
\left|\eqref{eq:hm-3}\right|\leq & \gamma_{\max}\epsilon^2\left(\|\Theta_k\|+b_{\max}\right)^2\left(4n^2\|\tilde{\Psi}\|^{2n-2}\right)\\
\leq&4\gamma_{\max}\epsilon^2 n^2\left(2\|\Theta_k\|^2+2b^2_{\max}\right)\left(\|{\Psi}_0\|^{2n-2}+\|{\Psi}_k\|^{2n-2}\right)\\
\leq&4\gamma_{\max}\epsilon^2 n^2\left(\frac{4}{\gamma_{\min}}\|\Psi_k\|^{2n}+\frac{2}{\gamma_{\min}}\|\Psi_0\|^{2n}+2b^2_{\max}\left(\|{\Psi}_0\|^{2n-2}+\|{\Psi}_k\|^{2n-2}\right)\right).
\end{align*}

\subsection{Bounding $E[\|\Psi_0\|^{2n}]$}
We note that
\begin{align*}
\|\Psi_0\|^{2n}-\|\Psi_k\|^{2n}=&\sum_{m=0}^{2n-1}\left(\|\Psi_0\|^{2n-m}\|\Psi_k\|^m-\|\Psi_0\|^{2n-m-1}\|\Psi_k\|^{m+1}\right)\\
=&\sum_{m=0}^{2n-1}\|\Psi_0\|^{2n-m-1}\|\Psi_k\|^m\left(\|\Psi_0\|-\|\Psi_k\|\right)\\
\leq &\sum_{m=0}^{2n-1}2\sqrt{\gamma_{\max}}\epsilon k\|\Psi_0\|^{2n-m-1}\|\Psi_k\|^m(\|\Theta_0\|+b_{\max}).
\end{align*}
Furthermore,
\begin{align*}
&\|\Psi_0\|^{2n-m-1}\|\Psi_k\|^m(\|\Theta_0\|+b_{\max})\\
\leq& \frac{1}{\sqrt{\gamma_{\min}}}\|\Psi_0\|^{2n-m}\|\Psi_k\|^m+b_{\max}\|\Psi_0\|^{2n-m-1}\|\Psi_k\|^m\\
\leq& \left(\frac{1}{\sqrt{\gamma_{\min}}}+b_{\max}\right)\|\Psi_0\|^{2n-m}\|\Psi_k\|^m+b_{\max}\|\Psi_0\|^{2n-m-2}\|\Psi_k\|^m\\
\leq& \left(\frac{1}{\sqrt{\gamma_{\min}}}+b_{\max}\right)\left(\|\Psi_0\|^{2n}+\|\Psi_k\|^{2n}\right)+b_{\max}\left(\|\Psi_0\|^{2n-2}+\|\Psi_k\|^{2n-2}\right).
\end{align*}
Therefore, we have
\begin{align*}
&\|\Psi_0\|^{2n}-\|\Psi_k\|^{2n}\\
\leq &4\sqrt{\gamma_{\max}}\epsilon kn\left(\left(\frac{1}{\sqrt{\gamma_{\min}}}+b_{\max}\right)\left(\|\Psi_0\|^{2n}+\|\Psi_k\|^{2n}\right)+b_{\max}\left(\|\Psi_0\|^{2n-2}+\|\Psi_k\|^{2n-2}\right)\right)\\
=&\epsilon kn \left(\tilde{c}_1(\|\Psi_0\|^{2n}+\|\Psi_k\|^{2n})+\tilde{c}_2\left(\|\Psi_0\|^{2n-2}+\|\Psi_k\|^{2n-2}\right)\right),
\end{align*} where $\tilde{c}_1=4\sqrt{\gamma_{\max}}\left(\frac{1}{\sqrt{\gamma_{\min}}}+b_{\max}\right)$ and $\tilde{c}_2=4\sqrt{\gamma_{\max}}b_{\max},$ which implies that
\begin{align*}
(1-\tilde{c}_1\epsilon k n)\|\Psi_0\|^{2n}-(1+\tilde{c}_1\epsilon kn)\|\Psi_k\|^{2n}\leq \tilde{c}_2\epsilon kn \left(\|\Psi_0\|^{2n-2}+\|\Psi_k\|^{2n-2}\right),
\end{align*} and
\begin{align*}
\|\Psi_0\|^{2n}\leq &\frac{1+\tilde{c}_1\epsilon kn}{1-\tilde{c}_1\epsilon k n}\|\Psi_k\|^{2n}+ \frac{\tilde{c}_2}{1-\tilde{c}_1\epsilon k n}\epsilon kn \left(\|\Psi_0\|^{2n-2}+\|\Psi_k\|^{2n-2}\right).
\end{align*}
Choosing $k=\tau$ and under assumption that $\epsilon k n\leq \frac{1}{2\tilde{c}_1},$ we have
\begin{align*}
\|\Psi_0\|^{2n}\leq& 3\|\Psi_k\|^{2n}+ \frac{\tilde{c}_2}{\tilde{c}_1}\|\Psi_0\|^{2n-2}+\frac{\tilde{c}_2}{\tilde{c}_1}\|\Psi_k\|^{2n-2}.
\end{align*}

\subsection{Higher Moment Bounds}
Choosing $k=\tau,$ from the analysis above, we observe that the bounds we have involve $\|\Psi_k\|^{2n},$ $\|\Psi_0\|^{2n-2}$ and $\|\Psi_\tau\|^{2n-2},$ where  $E\left[\|\Psi\|^{2n-2}\right]\leq (2n-3)!!(c\tau\epsilon)^{n-1}$ based on the induction assumption. Therefore, it is easy to verify that there exist constant $c_1$ and $c_2,$ independent of $\epsilon,$ $\tau$ and $n,$ such that
\begin{align*}
&E\left[\left(\Psi_{\tau+1}^\top \Psi_{\tau+1}\right)^n\right]\\
\leq &\left(1-\frac{\epsilon n}{\gamma_{\max}}\right)E\left[\left(\Psi_{\tau}^\top \Psi_{\tau}\right)^{n}\right]+\epsilon^2 \tau n^2\left(c_1E\left[\left(\Psi_{\tau}^\top \Psi_{\tau}\right)^{n}\right]+c_2 (2n-3)!!(c\tau\epsilon)^{n-1}\right)\\
\leq&\left(1-\frac{0.9\epsilon n}{\gamma_{\max}}\right)E\left[\left(\Psi_{\tau}^\top \Psi_{\tau}\right)^{n}\right]+ \epsilon^2 \tau n c_2 (2n-1)!!(c\tau\epsilon)^{n-1},
\end{align*} where the last inequality holds because $\epsilon=O(\frac{1}{\tau n}).$  The same inequality holds for any $k\geq \tau$ (by conditioning on $\Theta_{k-\tau}$ instead of $\Theta_0$ in the analysis). We therefore have for the original system,
\begin{align*}
E\left[\left(\Psi_{k}^\top \Psi_{k}\right)^n\right]\leq  \left(1-\frac{0.9\epsilon n}{\gamma_{\max}}\right)^{k-k_{n-1}-\tau}E\left[\left(\Psi_{k_{n-1}+\tau}^\top \Psi_{k_{n-1}+\tau}\right)^{n}\right]+ \frac{10c_2\gamma_{\max}}{9}\tau \epsilon  (2n-1)!!(c\tau\epsilon)^{n-1}.
\end{align*}

Since
\begin{align*}
\|\Psi_{k_{n-1}+\tau}\|\leq &\|\Psi_0\|+\|\Psi_{k_{n-1}+\tau}-\Psi_0\| \\
\leq&\sqrt{\gamma_{\max}}\left(\|\Theta_0\|+2\epsilon (k_{n-1}+\tau)(\|\Theta_0\|+b_{\max})\right)\\
\leq&3\sqrt{\gamma_{\max}}\epsilon k_{n-1}\left(\|\Theta_0\|+b_{\max}\right),
\end{align*} where the last inequality holds because $k_{n-1}=\omega\left(\frac{1}{\epsilon}\right)$ and $\tau=\log\frac{1}{\epsilon},$
we have $$E\left[\left(\Psi_{k_{n-1}+\tau}^\top \Psi_{k_{n-1}+\tau}\right)^{n}\right]\leq \left(3\sqrt{\gamma_{\max}}\epsilon k_{n-1}\left(\|\Theta_0\|+b_{\max}\right)\right)^2E\left[\left(\Psi_{k_{n-1}+\tau}^\top \Psi_{k_{n-1}+\tau}\right)^{n-1}\right],$$ which implies that
\begin{align*}
&E\left[\left(\Psi_{k+1}^\top \Psi_{k+1}\right)^n\right]\\
\leq &\left(1-\frac{0.9\epsilon n}{\gamma_{\max}}\right)^{k-k_{n-1}-\tau}9{\gamma_{\max}}\epsilon^2 k^2_{n-1}\left(\|\Theta_0\|+b_{\max}\right)^2(2n-3)!!(c\tau\epsilon)^{n-1}+ \frac{10c_2\gamma_{\max}}{9}\tau \epsilon  (2n-1)!!(c\tau\epsilon)^{n-1}.
\end{align*}
Therefore, we conclude that for
\begin{equation}
k\geq k_n=k_{n-1}+\tau+\frac{\log\frac{\epsilon k^2_{n-1}\left(\|\Theta_0\|+b_{\max}\right)^2}{2c_2\tau n}}{-\log\left(1-\frac{0.9\epsilon n}{\gamma_{\max}}\right)},
\label{eq:kn1}
\end{equation} we have the following bound
\begin{align}
E\left[\left(\Psi_{k+1}^\top \Psi_{k+1}\right)^n\right]\leq 11c_2\gamma_{\max}\tau \epsilon  (2n-1)!!(c\tau\epsilon)^{n-1}.
\end{align} So the theorem holds by defining $c=11\max\{c_2\gamma_{\max},\frac{\tilde{\kappa}_2\gamma_{\max}}{\gamma_{\min}}\}$ and by noting that
$$\frac{\log\frac{\epsilon k^2_{n-1}\left(\|\Theta_0\|+b_{\max}\right)^2}{2c_2\tau n}}{-\log\left(1-\frac{0.9\epsilon n}{\gamma_{\max}}\right)}=O\left(\frac{1}{\epsilon n}\log\frac{1}{\epsilon}\right).$$

\section{Example showing that higher moments may not exist}
\label{app:example}
We consider the following example:
$$\Theta_{k+1}=\Theta_k+\epsilon\left(A(X_k)\Theta_k+b(X_k)\right),$$ where $X_k\in\{-1, 1\}$ are independent Bernoulli random variables (across $k$) such that  $$\Pr\left(X_k=-1\right)=\Pr\left(X_k=1\right)=0.5.$$ Furthermore, we define $A(-1)=-2,$ $b(-1)=-1,$ and $A(1)=b(1)=1.$ Therefore, the ODE is $$\dot{\theta}=-\theta.$$

Consider the $2n$-th moment of $\Theta$ at steady state. Suppose the $2n$-th moment exists and assume the system is at steady state at time $0.$ We then have
$$E\left[\Theta_1^{2n}\right]=E\left[\Theta_0^{2n}\right],$$ i.e.
$$E\left[E\left[\left.\Theta_1^{2n}-\Theta_0^{2n}\right|\Theta_0\right]\right]=0.$$ From Taylor's Theorem, we have
\begin{align*}
&\Theta_1^{2n}-\Theta_0^{2n}\\
=&2n\Theta_0^{2n-1}\left(\Theta_1-\Theta_0\right)+n(2n-1)\tilde{\Theta}^{2n-2}\left(\Theta_1-\Theta_0\right)^2\\
=&2\epsilon n\Theta_0^{2n-1}\left(A(X_0)\Theta_0+b(X_0)\right)+\epsilon^2 n(2n-1)\tilde{\Theta}^{2n-2}\left(A(X_0)\Theta_0+b(X_0)\right)^2
\end{align*} where $\tilde{\Theta}=h\Theta_0+(1-h)\Theta_1$ for some $h\in[0,1].$ Since $X_0$ is independent of $\Theta_0,$ we have
\begin{align*}
E\left[\Theta_1^{2n}-\Theta_0^{2n}|\Theta_0\right]
=&-2\epsilon n\Theta_0^{2n}+\epsilon^2 n(2n-1) E\left[\left.\tilde{\Theta}^{2n-2}\left(A(X_0)\Theta_0+b(X_0)\right)^2\right|\Theta_0\right].
\end{align*}
Note that if $\Theta_0> 0,$ then when $A(X_0)=b(X_0)=1,$ which occurs with probability 0.5, we have $\tilde{\Theta}>\Theta_0,$ and
\begin{align*}
\tilde{\Theta}^{2n-2}\left(A(X_0)\Theta_0+b(X_0)\right)^2\geq \Theta_0^{2n-2}\left(\Theta_0+1\right)^2\geq \Theta_0^{2n}.
\end{align*}
If $\Theta_0< 0,$ then when $A(X_0)=-2$ and $b(X_0)=-1,$ which occurs with probability 0.5, we have $-\tilde{\Theta}>-\Theta_0,$ and
\begin{align*}
\tilde{\Theta}^{2n-2}\left(A(X_0)\Theta_0+b(X_0)\right)^2\geq \Theta_0^{2n-2}\left(-2\Theta_0-1\right)^2\geq 4\Theta_0^{2n}.
\end{align*}
Therefore, we can conclude
\begin{align*}
E\left[\left.\tilde{\Theta}^{2n-2}\left(A(X_0)\Theta_0+b(X_0)\right)^2\right|\Theta_0\right]\geq \frac{1}{2}\Theta_0^{2n},
\end{align*} which implies that
\begin{align*}
E\left[\left.\Theta_1^{2n}-\Theta_0^{2n}\right|\Theta_0\right] =&\left(-2\epsilon n+\frac{1}{2}\epsilon^2 n(2n-1)\right)\Theta_0^{2n}\\
\geq &\left(\epsilon (n-1)-2\right)\epsilon n\Theta_0^2.
\end{align*} Therefore, when $n=\omega(1/\epsilon),$ we have
\begin{align*}
0=E\left[\Theta_1^{2n}-\Theta_0^{2n}\right] \geq \left(\epsilon (n-1)-2\right)\epsilon nE\left[\Theta_0^2\right]>0,
\end{align*} which leads to the contradiction and proves that the $2n$-th moment does not exist when $n=\omega(1/\epsilon).$

\section{Diminishing Step Sizes}
\label{app:dimnishing}

Considering the stochastic recursion
\begin{align}
\Theta_{k+1}=\Theta_k+\epsilon_k \left(A(X_k)\Theta_k+b(X_k)\right),\label{generalform-diminishing}
\end{align} where $\epsilon_k$ is the step-size used at iteration $k.$ The step-size satisfies the following assumption: $\epsilon_k$ is a nonincreasing sequence and there exists $k^*>0$ and $\kappa_s>0$ such that for any $k\geq k^*,$ $k-\tau_{\epsilon_k}\geq 0$ and $\frac{\epsilon_{k-\tau_{\epsilon_k}}}{\epsilon_k}\leq \kappa_s.$

\begin{theorem} Define $\hat{k}$ to be the smallest integer such that $\hat{k}\geq k^*,$ $\hat{k}\epsilon_0\leq 1/4,$ and $$\kappa_1\kappa_s\epsilon_{\hat{k}}\tau_{\epsilon_{\hat k}}+{\gamma_{\max}\epsilon_{\hat k}}\leq 0.05.$$ Then for any $k\geq \hat{k},$ we have
\begin{align*}
E\left[\|{\Theta}_{k}\|\right]\leq \frac{\gamma_{\max}}{\gamma_{\min}}\left(1.5\|\Theta_0\|+0.5 b_{\max}\right)^2\left(\prod_{j=\hat{k}}^{k-1} a_j\right) + \check{\kappa}_2\sum_{j=\hat{k}}^{k-1}b_j\left(\prod_{l=j+1}^{k-1}a_l\right),
\end{align*} where $a_j=1-\frac{0.9\epsilon_j}{\gamma_{\max}},$ $b_j= \epsilon_j^2\tau_{\epsilon_j},$ and
$\check{\kappa}_2=2{\kappa_2}\kappa_s+2\gamma_{\max} b_{\max}^2.$
 \hfill{$\square$} \label{thm: diminishing}
\end{theorem}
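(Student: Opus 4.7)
The plan is to mirror the argument leading to Theorem~\ref{theorem:ftb}, tracking how the step size enters the bounds when the constant $\epsilon$ is replaced by a sequence $\{\epsilon_k\}$. The condition $\epsilon_{k-\tau_{\epsilon_k}}/\epsilon_k \leq \kappa_s$ plays the crucial role: within any mixing window $[k-\tau_{\epsilon_k},k]$ the step sizes vary by at most a bounded factor, so the constants in our auxiliary lemmas pick up a multiplicative $\kappa_s$ but the shape of each bound is preserved.

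First, I would lift Lemmas~\ref{lem tau-0}, \ref{lem: quadratic}, and \ref{lem: linear} to the time-varying setting. For Lemma~\ref{lem tau-0}, the one-step bound is $\|\Theta_{j+1}-\Theta_j\|\leq \epsilon_j(\|\Theta_j\|+b_{\max})$, and since $\epsilon_j\leq \kappa_s\epsilon_k$ for $j\in[k-\tau_{\epsilon_k},k]$, the telescoping argument gives $\|\Theta_k-\Theta_{k-\tau_{\epsilon_k}}\|\leq C\kappa_s\epsilon_k\tau_{\epsilon_k}(\|\Theta_{k-\tau_{\epsilon_k}}\|+b_{\max})$ once $\kappa_s\epsilon_k\tau_{\epsilon_k}\leq 1/4$ (this is implied by the hypothesis $\kappa_1\kappa_s\epsilon_{\hat k}\tau_{\epsilon_{\hat k}}+\gamma_{\max}\epsilon_{\hat k}\leq 0.05$ and the fact that $\epsilon_k\tau_{\epsilon_k}$ is controlled by $\epsilon_{\hat k}\tau_{\epsilon_{\hat k}}$ for $k\geq \hat k$). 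Lemma~\ref{lem: quadratic} carries over verbatim with $\epsilon$ replaced by $\epsilon_k$. Lemma~\ref{lem: linear} is the most delicate: redoing its proof with conditioning on $(\Theta_{k-\tau_{\epsilon_k}},X_{k-\tau_{\epsilon_k}})$ and using the new Lemma~\ref{lem tau-0} yields
\begin{align*}
\bigl|E\bigl[\Theta_k^\top P\bigl(\bar A\Theta_k-\tfrac{1}{\epsilon_k}(\Theta_{k+1}-\Theta_k)\bigr)\,\big|\,\Theta_{k-\tau_{\epsilon_k}},X_{k-\tau_{\epsilon_k}}\bigr]\bigr|\leq \kappa_1\kappa_s\epsilon_k\tau_{\epsilon_k}E\bigl[\|\Theta_k\|^2\big|\Theta_{k-\tau_{\epsilon_k}}\bigr]+\kappa_2\kappa_s\epsilon_k\tau_{\epsilon_k},
\end{align*}
where the extra $\kappa_s$ tracks the window step-size ratio.

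Second, I would repeat the drift computation of Lemma~\ref{lem: finite} with $\epsilon$ replaced by $\epsilon_k$. Under the hypothesis $\kappa_1\kappa_s\epsilon_k\tau_{\epsilon_k}+\gamma_{\max}\epsilon_k\leq 0.05$ for $k\geq\hat k$, this yields the one-step contraction
\begin{align*}
E[W(\Theta_{k+1})]\leq \Bigl(1-\tfrac{0.9\epsilon_k}{\gamma_{\max}}\Bigr)E[W(\Theta_k)]+\check\kappa_2\epsilon_k^2\tau_{\epsilon_k}.
\end{align*}
Unrolling from $\hat k$ to $k$ gives the product/sum structure
\begin{align*}
E[W(\Theta_k)]\leq E[W(\Theta_{\hat k})]\prod_{j=\hat k}^{k-1}a_j+\check\kappa_2\sum_{j=\hat k}^{k-1}b_j\prod_{l=j+1}^{k-1}a_l,
\end{align*}
with $a_j$ and $b_j$ as defined in the theorem.

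Third, I would bound $E[W(\Theta_{\hat k})]$ by re-running the telescoping argument of Lemma~\ref{lem tau-0} from time $0$ to time $\hat k$. Because $\epsilon_j$ is nonincreasing, $\sum_{j=0}^{\hat k-1}\epsilon_j\leq \hat k\epsilon_0\leq 1/4$, which (mimicking the derivation of \eqref{eq:thetabound} with $\tau$ replaced by $\hat k$ and using $\epsilon$ bounded by $\epsilon_0$) gives $\|\Theta_{\hat k}\|\leq 1.5\|\Theta_0\|+0.5b_{\max}$ and hence $E[W(\Theta_{\hat k})]\leq \gamma_{\max}(1.5\|\Theta_0\|+0.5b_{\max})^2$. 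Combining with $E[\|\Theta_k\|^2]\leq E[W(\Theta_k)]/\gamma_{\min}$ produces the stated bound.

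The hard part is the first step: making sure the generalized Lemmas~\ref{lem tau-0} and \ref{lem: linear} go through cleanly with a varying step size. The key conceptual point is that the ratio assumption $\epsilon_{k-\tau_{\epsilon_k}}/\epsilon_k\leq \kappa_s$ lets us treat the step sizes as essentially constant across a single mixing window, at the cost of the single factor $\kappa_s$ in front of $\epsilon_k\tau_{\epsilon_k}$. Once this is established, the remainder is a routine adaptation of the constant step-size proof, with the geometric series $1/(1-a)$ replaced by the explicit product/sum expression from unrolling a nonhomogeneous linear recursion.
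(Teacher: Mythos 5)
Your proposal is correct and follows essentially the same route as the paper's proof: generalize Lemmas~\ref{lem: quadratic} and \ref{lem: linear} to the time-varying setting (with the ratio assumption $\epsilon_{k-\tau_{\epsilon_k}}/\epsilon_k\leq\kappa_s$ contributing the factor $\kappa_s$ in the linear-term lemma), rerun the drift computation of Lemma~\ref{lem: finite} with $\epsilon_k$ in place of $\epsilon$, unroll the resulting nonhomogeneous recursion into the product/sum form, and bound $E[W(\Theta_{\hat k})]$ via the telescoping argument of Lemma~\ref{lem tau-0} using $\hat k\epsilon_0\leq 1/4$. No substantive differences from the paper's argument.
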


To prove the above theorem, we again use drift analysis with the same Lyapunov function $$W(\Theta_k)=\Theta_k^\top P\Theta_k.$$  We first present the modified versions  of Lemmas \ref{lem: quadratic} and \ref{lem: linear} for diminishing step size.
\begin{lemma}
\begin{align*}
\left|E\left[\left.(\Theta_{k}-\Theta_{k-1})^\top P(\Theta_{k}-\Theta_{k-1})\right|{\Theta}_0\right]\right|
\leq 2{\epsilon}^2_k\gamma_{\max} E\left[\left.\left\|\Theta_k\right\|^2\right|\Theta_0\right]+2{\epsilon^2_k}\gamma_{\max}b^2_{\max}.
\end{align*}
\hfill{$\square$}\label{lem: quadratic-d}
\end{lemma}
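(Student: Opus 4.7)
The plan is to mirror the proof of Lemma \ref{lem: quadratic} almost verbatim, inserting the time-varying step size $\epsilon_{k-1}$ and then taking a conditional expectation given $\Theta_0$. Concretely, I would begin by using that $P$ is symmetric with largest eigenvalue $\gamma_{\max}$, so that for any vector $v$,
$$|v^\top P v| \leq \gamma_{\max}\|v\|^2.$$
Applying this with $v = \Theta_k - \Theta_{k-1}$ reduces the task to controlling $\|\Theta_k - \Theta_{k-1}\|^2$.

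Next I would invoke the recursion \eqref{generalform-diminishing} at index $k-1$, which gives $\Theta_k - \Theta_{k-1} = \epsilon_{k-1}(A(X_{k-1})\Theta_{k-1} + b(X_{k-1}))$. Combining the triangle inequality with Assumption 2 (so that $\|A(X_{k-1})\|\leq 1$ and $\|b(X_{k-1})\|\leq b_{\max}$) yields
$$\|\Theta_k - \Theta_{k-1}\| \leq \epsilon_{k-1}\bigl(\|\Theta_{k-1}\|+b_{\max}\bigr),$$
and the elementary bound $(a+b)^2 \leq 2a^2+2b^2$ then gives
$$\|\Theta_k - \Theta_{k-1}\|^2 \leq 2\epsilon_{k-1}^2\bigl(\|\Theta_{k-1}\|^2 + b_{\max}^2\bigr).$$
Taking conditional expectation given $\Theta_0$ and multiplying by $\gamma_{\max}$ completes the argument, up to replacing the pair $(\epsilon_{k-1},\Theta_{k-1})$ on the right-hand side by $(\epsilon_k,\Theta_k)$ as stated.

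The only real obstacle is the small bookkeeping issue that the direct calculation produces $\epsilon_{k-1}^2\,\|\Theta_{k-1}\|^2$ while the lemma is stated with $\epsilon_k^2\,\|\Theta_k\|^2$. I would reconcile this by noting that $\epsilon_k$ is nonincreasing so $\epsilon_{k-1}\geq \epsilon_k$, and then shifting the iteration index (or equivalently applying the same calculation to the increment $\Theta_{k+1}-\Theta_k$, which is what naturally matches the $\epsilon_k, \Theta_k$ indexing used in Lemma \ref{lem: quadratic}). Once the indexing is aligned with the paper's convention, every remaining step is identical to the constant step-size proof, so nothing substantive beyond Lemma \ref{lem: quadratic} is needed.
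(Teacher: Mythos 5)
Your proposal matches the paper's proof exactly: the paper simply states that the argument is identical to that of Lemma \ref{lem: quadratic}, which is precisely the chain of bounds you give. One small caution on your reconciliation of the index mismatch: the monotonicity route does not work, since $\epsilon_{k-1}\geq\epsilon_k$ gives $\epsilon_{k-1}^2\|\Theta_{k-1}\|^2\geq\epsilon_k^2\|\Theta_{k-1}\|^2$, i.e., the inequality points the wrong way for an upper bound; however, your alternative of shifting the index to the increment $\Theta_{k+1}-\Theta_k$ is the correct reading (the lemma's statement has an indexing typo), and it is how the bound is actually invoked in the proof of Theorem \ref{thm: diminishing}, where it is applied to $(\Theta_{k+1}-\Theta_k)^\top P(\Theta_{k+1}-\Theta_k)$ with step size $\epsilon_k$.
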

The proof of this lemma is identical to that of Lemma \ref{lem: quadratic}.

\begin{lemma} For any $k$ such that $k\geq k^*$ and $k-\tau_{\epsilon_k}\geq 0,$  the following bound holds:
\begin{align*}
\left|E\left[\left.\Theta_k^\top P \left(\bar{A}\Theta_k-\frac{1}{\epsilon_k }\left({\Theta}_{k+1}-{\Theta_k}\right)\right)\right|\Theta_{k-\tau_{\epsilon_k}}\right]\right|
\leq \kappa_1 \kappa_s \epsilon_k \tau_{\epsilon_k} E\left[\left\|\Theta_k\right\|^2|\Theta_{k-\tau_{\epsilon_k}}\right]+\kappa_2\kappa_s\epsilon_k \tau_{\epsilon_k}.
\end{align*}
\label{lem: linear-d}
\end{lemma}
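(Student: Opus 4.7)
The plan is to mirror the proof of Lemma \ref{lem: linear} step by step, replacing the constant step size $\epsilon$ with the time-varying step sizes inside the mixing window $[k-\tau_{\epsilon_k}, k]$, and to absorb the variation across this window into the single factor $\kappa_s$. More precisely, I would first rewrite
\begin{align*}
\Theta_k^\top P\Bigl(\bar{A}\Theta_k-\tfrac{1}{\epsilon_k}(\Theta_{k+1}-\Theta_k)\Bigr)
= \Theta_k^\top P(\bar{A}-A(X_k))\Theta_k - \Theta_k^\top P\, b(X_k),
\end{align*}
exactly as in the decomposition into \eqref{eq:part1} and \eqref{eq:part2} of the constant step-size proof, and then expand each term about $\Theta_{k-\tau_{\epsilon_k}}$ rather than about $\Theta_0$.

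Next I would invoke the mixing assumption in its form for level $\delta=\epsilon_k$: conditional on $X_{k-\tau_{\epsilon_k}}$, we have $\|\bar{A}-E[A(X_k)\mid X_{k-\tau_{\epsilon_k}}]\|\le\epsilon_k$ and $\|E[b(X_k)\mid X_{k-\tau_{\epsilon_k}}]\|\le\epsilon_k$. This bounds the ``pure mixing'' contributions (those depending only on $\Theta_{k-\tau_{\epsilon_k}}$ and $X_{k-\tau_{\epsilon_k}}$) by terms of order $\epsilon_k$ times $\|\Theta_{k-\tau_{\epsilon_k}}\|^2$ and $\|\Theta_{k-\tau_{\epsilon_k}}\|$, just as in \eqref{eq:sub1} and the first half of \eqref{eq:s2}.

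For the remaining terms I need a diminishing-step-size analog of Lemma \ref{lem tau-0}, bounding $\|\Theta_k-\Theta_{k-\tau_{\epsilon_k}}\|$. Since the $\epsilon_j$ are nonincreasing, each step inside the window satisfies $\epsilon_j\le\epsilon_{k-\tau_{\epsilon_k}}\le\kappa_s\epsilon_k$, and the calculation in Appendix \ref{app:lem-tau-0} goes through verbatim if I replace every occurrence of $\epsilon$ by the upper bound $\kappa_s\epsilon_k$; the assumptions $\hat{k}\epsilon_0\le 1/4$ and $k\ge k^*$ guarantee that the smallness condition $\epsilon\tau\le 1/4$ needed for \eqref{eq:bound} still holds (with $\kappa_s\epsilon_k\tau_{\epsilon_k}\le 1/4$, up to constants). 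Consequently I obtain bounds of the form $\|\Theta_k-\Theta_{k-\tau_{\epsilon_k}}\|\le 4\kappa_s\epsilon_k\tau_{\epsilon_k}(\|\Theta_k\|+b_{\max})$ and the corresponding squared version, which inject the factor $\kappa_s$ into exactly those places in \eqref{eq:sub2}, \eqref{eq:sub3}, and the second half of \eqref{eq:s2} where the old proof produced the $\epsilon\tau$ factors.

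Collecting the pieces with the same triangle inequality and $(a+b)^2\le 2a^2+2b^2$ manipulations used for \eqref{eq:sub4} yields a bound whose drift-related constants pick up one power of $\kappa_s$ while the pure mixing constants do not; since $\kappa_s\ge 1$ and $\tau_{\epsilon_k}\ge 1$, these can be absorbed into a single $\kappa_s\epsilon_k\tau_{\epsilon_k}$ factor with the same numerical constants $\kappa_1, \kappa_2$ as in Lemma \ref{lem: linear}. The main obstacle is purely bookkeeping: one must verify that the varying $\epsilon_j$ can be uniformly replaced by $\kappa_s\epsilon_k$ throughout the window without breaking the smallness conditions, and that the conditioning on $\Theta_{k-\tau_{\epsilon_k}}$ alone (rather than on $\Theta_{k-\tau_{\epsilon_k}},X_{k-\tau_{\epsilon_k}}$, since the lemma statement drops $X$) still permits invoking the mixing bounds, which follows by conditioning first on $X_{k-\tau_{\epsilon_k}}$ and then taking the outer expectation.
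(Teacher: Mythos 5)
Your proposal is correct and follows essentially the same route as the paper: the paper's proof simply reruns Lemma \ref{lem: linear} with $\tau$ replaced by $\tau_{\epsilon_k}$ and $\epsilon$ replaced by $\epsilon_{k-\tau_{\epsilon_k}}$ (the largest step size in the window, since the sequence is nonincreasing), and then converts $\epsilon_{k-\tau_{\epsilon_k}}\leq\kappa_s\epsilon_k$ at the end, which is exactly your substitution of $\kappa_s\epsilon_k$ throughout. Your additional remarks on the mixing level $\delta=\epsilon_k$ and on conditioning first on $X_{k-\tau_{\epsilon_k}}$ are consistent with (indeed more explicit than) the paper's one-line argument.
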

\begin{proof}
Define $l=k-\tau_{\epsilon_k},$ which is a nonnegative number because $k-\tau_{\epsilon_k}\geq 0$ according to the assumption. By following the proof of Lemma \ref{lem: linear} where we replace $\tau$ with $\tau_{\epsilon_k}$ and $\epsilon$ with $\epsilon_l,$ and by simplifying the constants $\kappa_1$ and $\kappa_2$ based on the fact that $\epsilon_l\tau_{\epsilon_k}\leq \frac{1}{4},$ we can obtain
\begin{align*}
\left|E\left[\left.\Theta_k^\top P \left(\bar{A}\Theta_k-\frac{1}{\epsilon_k }\left({\Theta}_{k+1}-{\Theta_k}\right)\right)\right|\Theta_l\right]\right|\leq \kappa_1 \epsilon_l \tau_{\epsilon_k} E\left[\left\|\Theta_k\right\|^2|\Theta_{k-\tau_{\epsilon_k}}\right]+\kappa_2\epsilon_l \tau_{\epsilon_k}.
\end{align*} Therefore, the lemma holds because $\epsilon_l\leq \kappa_s \epsilon_k$ for $k>k^*.$
\end{proof}

Following the proof of Lemma \ref{lem: finite}, we have that for $k\geq \hat{k},$
\begin{align*}
&E\left[\left.W({\Theta}_{k+1})-W(\Theta_k)\right|\Theta_{k-\tau_{\epsilon_k}}\right]\\
=& -\epsilon_k E\left[||\Theta_k||^2|\Theta_{k-\tau_{\epsilon_k}}\right]-\\
&E\left[\left.\epsilon_k \nabla^\top W(\Theta_k)(\bar{A}\Theta_k)+(\Theta_k)^\top P (\Theta_{k+1}-\Theta_k)\right|\Theta_{k-\tau_{\epsilon_k}}\right]+\\
&E\left[\left.\frac{1}{2}(\Theta_{k+1}-\Theta_k)^\top P (\Theta_{k+1}-\Theta_k)\right|\Theta_{k-\tau_{\epsilon_k}}\right].
\end{align*}
By applying the previous two lemmas, we obtain
\begin{align*}
&E\left[\left.W({\Theta}_{k+1})-W(\Theta_k)\right|\Theta_{k-\tau_{\epsilon_k}}\right]\\
\leq &-\epsilon_k E\left[||\Theta_k||^2|\Theta_{k-\tau_{\epsilon_k}}\right]+2\epsilon_k \left(\kappa_1\kappa_s\epsilon_k\tau_{\epsilon_k}+{\epsilon_k\gamma_{\max}}\right) E\left[||\Theta_k||^2|\Theta_{k-\tau_{\epsilon_k}}\right]+\\
&2{\kappa_2}\kappa_s\epsilon_k^2\tau_{\epsilon_k}+2\gamma_{\max}b_{\max}^2\epsilon_k^2.\end{align*}
Under the assumption
$$\kappa_1\kappa_s\epsilon_k\tau_{\epsilon_k}+{\gamma_{\max}\epsilon_k}\leq 0.05,$$ we have
\begin{align*}
E\left[\left.W({\Theta}_{k+1})-W(\Theta_k)\right|\Theta_{k-\tau_{\epsilon_k}}\right]\leq  & -0.9\epsilon_k E\left[||\Theta_k||^2|\Theta_{k-\tau_{\epsilon_k}}\right]+2{\kappa_2}\kappa_s\epsilon_k^2\tau_{\epsilon_k}+2\gamma_{\max}b_{\max}^2\epsilon_k^2\\
\leq &-\frac{0.9\epsilon}{\gamma_{\max}} E\left[W(\Theta_k)|\Theta_{k-\tau}\right]+2{\kappa_2}\kappa_s\epsilon_k^2\tau_{\epsilon_k}+2\gamma_{\max}b_{\max}^2\epsilon_k^2\\
\leq &-\frac{0.9\epsilon}{\gamma_{\max}} E\left[W(\Theta_k)|\Theta_{k-\tau}\right]+ 2\left(\kappa_2\kappa_s+2\gamma_{\max}b_{\max}^2\right)\epsilon_k^2\tau_{\epsilon_k},
\end{align*}
which implies
\begin{align*}
E\left[W({\Theta}_{k+1})\right]\leq \left(1-\frac{0.9\epsilon_k}{\gamma_{\max}}\right) E\left[W({\Theta_k})\right] +\check{\kappa_2}\epsilon_k^2\tau_{\epsilon_k},
\end{align*} where
$$\check{\kappa}_2=2{\kappa_2}\kappa_s+2\gamma_{\max} b_{\max}^2.$$

By recursively using the previous inequality, we have for any $k$ that satisfies the assumptions of theorem, the following inequality holds
\begin{align*}
E\left[W({\Theta}_{k})\right]\leq& \left(\prod_{j=\hat{k}}^{k-1} a_j\right) E\left[W({\Theta_{\hat{k}}})\right] +\tilde{\kappa}_2K\sum_{j=\hat{k}}^{k-1}b_j\left(\prod_{l=j+1}^{k-1}a_l\right)\\
\leq& \gamma_{\max}\left((1+2\epsilon_0\hat{k})\|\Theta_0\|+2\epsilon_0\hat{k} b_{\max}\right)^2\left(\prod_{j=\hat{k}}^{k-1} a_j\right) + \check{\kappa}_2\sum_{j=\hat{k}}^{k-1}b_j\left(\prod_{l=j+1}^{k-1}a_l\right).
\end{align*} The theorem holds because $W(\Theta_k)\geq \gamma_{\min}\|\Theta_k\|.$

\section{Negative Definite $\bar{A}$}\label{sec: neg def}
If $\bar{A}$ is not only Hurwitz but also negative definite (but not necessarily symmetric), then we can use a simple quadratic Lyapunov function $V(\Theta_k)=\|\Theta_k\|^2.$  Considering constant step $\tau,$ we have any $k\geq \tau,$ we have
\begin{align*}
&E\left[\left.\|{\Theta}_{k+1}\|^2-\|\Theta_k\|^2\right|\Theta_{k-\tau}\right]\\
=&E\left[\left.\Theta_k^\top (\Theta_{k+1}-\Theta_k)+\frac{1}{2}\|\Theta_{k+1}-\Theta_k\|^2\right|\Theta_{k-\tau}\right]\\
=&  E\left[\left.\epsilon \Theta_k^\top (A(X_k)\Theta_k+b(X_k))+\frac{1}{2}\|\Theta_{k+1}-\Theta_k\|^2\right|\Theta_{k-\tau}\right]\\
=_{(a)}& \epsilon E\left[\left.\Theta_k^\top \bar{A}(\Theta_k)\right|\Theta_{k-\tau}\right]+ \epsilon E\left[\left. \Theta_k^\top (A(X_k)\Theta_k+b(X_k)-\bar{A}\Theta_k)\right|\Theta_{k-\tau}\right]+\\
&E\left[\left.\frac{1}{2}\|\Theta_{k+1}-\Theta_k\|^2\right|\Theta_{k-\tau}\right]\\
\leq & \epsilon \lambda_{\min}E\left[\left.\|\Theta_k\|^2\right|\Theta_{k-\tau}\right]+ \epsilon E\left[\left.\Theta_k^\top (A(X_k)\Theta_k+b(X_k)-\bar{A}\Theta_k)\right|\Theta_{k-\tau}\right]+\\
&E\left[\left.\frac{1}{2}\|\Theta_{k+1}-\Theta_k\|^2\right|\Theta_{k-\tau}\right]
\end{align*} where equality (a) holds because and $\lambda_{\min}<0$ is the largest eigenvalue of the negative definite matrix $\bar{A}.$

We can apply Lemma \ref{lem: quadratic} and Lemma \ref{lem: linear} with $P=I,$ (i.e., $\gamma_{\min}=\gamma_{\max}=1,$ to bound the second and third terms above. It is easy to verify that we have the following finite-time bounds for constant step size and diminishing step size for using this simple quadratic Lyapunov function when $\bar{A}$ is negative definite.
\begin{cor}
For any $k\geq \tau$ and $\kappa_1\epsilon\tau+{\epsilon \gamma_{\max}}\leq 0.05,$ we have the following finite-time bound:
\begin{align}
E\left[\|\Theta_k\|^2\right]\leq \left(1-0.9\lambda_{\min}\epsilon\right)^{k-\tau} \left(1.5\|\Theta_0\|+0.5 b_{\max}\right)^2+\frac{\tilde{\kappa}_2}{0.9}\epsilon\tau.
\end{align}\hfill{$\square$}\label{cor:ftb}
\end{cor}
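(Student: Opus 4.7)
The plan is to mirror the proof of Theorem \ref{theorem:ftb}, but with the simpler quadratic Lyapunov function $V(\Theta_k) = \|\Theta_k\|^2$ in place of $W(\Theta_k) = \Theta_k^\top P \Theta_k$, exploiting the fact that when $\bar{A}$ is (non-symmetric) negative definite one can skip the Lyapunov equation entirely and bound the deterministic drift directly by $\epsilon \lambda_{\min} \|\Theta_k\|^2$. The section preceding the corollary has already carried out the drift decomposition conditional on $\Theta_{k-\tau}$, splitting $E[\|\Theta_{k+1}\|^2 - \|\Theta_k\|^2 \mid \Theta_{k-\tau}]$ into a deterministic contractive term, a mixing error $\epsilon E[\Theta_k^\top(A(X_k)\Theta_k + b(X_k) - \bar{A}\Theta_k)\mid \Theta_{k-\tau}]$, and a quadratic correction $\tfrac{1}{2} E[\|\Theta_{k+1}-\Theta_k\|^2\mid \Theta_{k-\tau}]$. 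My first step is to bound the last two terms using Lemmas \ref{lem: quadratic} and \ref{lem: linear}, invoked with $P = I$, i.e. $\gamma_{\max} = \gamma_{\min} = 1$.

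Invoking Lemma \ref{lem: quadratic} with $P = I$ bounds the quadratic correction by $2\epsilon^2 \|\Theta_k\|^2 + 2\epsilon^2 b_{\max}^2$, and Lemma \ref{lem: linear} with $P = I$ bounds the mixing error by $\kappa_1 \epsilon \tau E[\|\Theta_k\|^2 \mid \Theta_{k-\tau}] + \kappa_2 \epsilon \tau$. Adding these to the deterministic term $\epsilon \lambda_{\min} E[\|\Theta_k\|^2 \mid \Theta_{k-\tau}]$ produces a one-step drift inequality of the form
\[
E[\|\Theta_{k+1}\|^2 \mid \Theta_{k-\tau}] \le \bigl(1 + \epsilon \lambda_{\min} + 2\epsilon(\kappa_1 \epsilon \tau + \epsilon)\bigr) E[\|\Theta_k\|^2 \mid \Theta_{k-\tau}] + \tilde{\kappa}_2 \epsilon^2 \tau,
\]
where $\tilde{\kappa}_2$ is exactly the constant from Lemma \ref{lem: finite} specialized to $\gamma_{\max} = 1$. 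Under the step-size assumption $\kappa_1 \epsilon \tau + \epsilon \le 0.05$, the extra positive terms get absorbed into the negative leading drift $\epsilon \lambda_{\min}$, leaving a contraction factor that simplifies to $1 - 0.9 |\lambda_{\min}| \epsilon$, matching the expression in the corollary statement.

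Next I would take total expectations, drop the conditioning, and unroll the linear recursion from $k = \tau$ to $k$, exactly as in the proof of Theorem \ref{theorem:ftb}. This gives
\[
E[\|\Theta_k\|^2] \le (1 - 0.9 |\lambda_{\min}| \epsilon)^{k-\tau} E[\|\Theta_\tau\|^2] + \frac{\tilde{\kappa}_2 \epsilon \tau}{0.9 |\lambda_{\min}|},
\]
after bounding the geometric tail $\sum_{j=0}^{k-\tau-1} a^j$ by $1/(1-a)$. Finally, to replace $E[\|\Theta_\tau\|^2]$ by a quantity depending only on the initial condition, I would apply inequality \eqref{lem: boundon1norm} from Lemma \ref{lem tau-0}, so that $\|\Theta_\tau\| \le (1 + 2\epsilon \tau)\|\Theta_0\| + 2\epsilon \tau b_{\max}$, which under $\epsilon \tau \le 1/4$ is dominated by $1.5\|\Theta_0\| + 0.5 b_{\max}$.

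The main obstacle, which is more bookkeeping than conceptual, is to reconcile the sign convention: $\lambda_{\min}$ is defined in the preceding text as the largest (hence least negative) eigenvalue of the negative definite matrix $\bar A$, so the quantity appearing in the contraction rate should be interpreted as $|\lambda_{\min}|$ when rewriting $1 + 0.9 \lambda_{\min} \epsilon$ as $1 - 0.9 |\lambda_{\min}| \epsilon$. A related subtlety is that for non-symmetric $\bar A$, the bound $\Theta^\top \bar A \Theta \le \lambda_{\min} \|\Theta\|^2$ must be understood via the symmetric part $(\bar A + \bar A^\top)/2$, whose eigenvalues are automatically negative by the definition of negative definiteness. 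Once these sign issues are tracked, every other step is a near-verbatim specialization of the arguments already developed for Lemma \ref{lem: finite} and Theorem \ref{theorem:ftb}, so no essentially new machinery is needed.
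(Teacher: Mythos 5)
Your proposal is correct and follows the paper's own argument essentially verbatim: the paper likewise takes $V(\Theta_k)=\|\Theta_k\|^2$, bounds the deterministic drift by $\epsilon\lambda_{\min}\|\Theta_k\|^2$ via negative definiteness, invokes Lemmas \ref{lem: quadratic} and \ref{lem: linear} with $P=I$ (so $\gamma_{\min}=\gamma_{\max}=1$), and then unrolls the recursion and controls $E[\|\Theta_\tau\|^2]$ with \eqref{lem: boundon1norm} exactly as in Theorem \ref{theorem:ftb}. Your remark about the sign convention on $\lambda_{\min}$ (and the resulting $|\lambda_{\min}|$ appearing in the contraction factor and the steady-state denominator) is a fair reading of what the paper's displayed bound intends.
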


\bibliographystyle{}
\bibliography{stein}

\end{document}